\documentclass[11pt]{article}

\usepackage[margin=0.92in]{geometry}
\usepackage[T1]{fontenc}
\usepackage[utf8]{inputenc}
\usepackage{lmodern}
\usepackage{microtype}
\usepackage{amsmath,amssymb,amsthm}
\usepackage{booktabs}
\usepackage{tabularx}
\usepackage{graphicx}
\usepackage{placeins}
\usepackage{float}
\usepackage{enumitem}
\usepackage[numbers,sort&compress]{natbib}
\usepackage{xurl}

\usepackage[hidelinks]{hyperref}

\newtheorem{proposition}{Proposition}[section]
\newtheorem{lemma}[proposition]{Lemma}
\newtheorem{corollary}[proposition]{Corollary}
\newcommand{\loss}{\mathcal{L}}
\newcommand{\feasible}{\mathcal{F}}
\newcommand{\pool}{\mathcal{R}}
\newcommand{\E}{\mathbb{E}}

\newcolumntype{Y}{>{\raggedright\arraybackslash}X}

\title{\textbf{PruneShift: A Framework for Evaluating Decision\\
Reliability in Structured Pruning}}
\author{%
  Hao Ye$^{1,2}$ and Gaopeng Zhang$^{1}$\\
  \small $^{1}$Xi'an Institute of Optics and Precision Mechanics, Chinese Academy of Sciences\\
  \small $^{2}$University of Chinese Academy of Sciences\\
  \small \texttt{yehao22@mails.ucas.ac.cn}; \texttt{zhanggaopeng@opt.ac.cn}%
}

\date{}

\begin{document}
\maketitle

\begin{abstract}
Structured pruning uses surrogate objectives because direct task evaluation
over every feasible mask is too expensive. Most evaluations report average
surrogate error or rank correlation on broadly sampled masks. These summaries
do not directly test the mask chosen by the surrogate. We introduce
PruneShift, an evaluation framework that separates broad predictive fidelity,
fidelity near selector outputs, and the quality of the selected pruning
decision.
We first prove that Spearman and Kendall agreement can approach one while
normalized selection regret remains maximal. We then derive sufficient
conditions based on uniform error, selector suboptimality, decision margin,
density ratio, and comparison mass. The analysis also yields a finite pool
certificate with an explicit excess cost bound. Four studies test
different links in this argument. External
TextbookQA confirmation is heterogeneous: 7 of 20 simultaneous intervals
favor the surrogate-selected mask, 6 favor its fixed comparator, and 7 cross
zero. On a fixed Natural Questions pool, strict improvement holds in one of
four settings. A controlled QQP experiment supports the proposed coverage
mechanism in all 16 prespecified endpoints, although the sufficient bounds are
conservative. Finally, a restricted OSSCAR reconstruction study on OPT-125M
shows better local than broad fidelity in 68 of 75 primary endpoints.
Independent fixed-mask confirmation is inconclusive in 24 of 25 endpoints and
favors the comparator in one. These results show why predictive fit, decision
reliability, and pruning method quality require separate evidence.
\end{abstract}

\section{Introduction}

Structured pruning removes complete attention heads, channels, neurons, or
blocks. The resulting model can retain regular tensor structure, so this form
of sparsity is attractive for deployment
\citep{vaswani2017attention,michel2019sixteen,lagunas2021block,xia2022cofi}.
The search problem is combinatorial. A typical method therefore builds a
surrogate from weights, activations, gradients, curvature, or reconstruction
error. It then optimizes that surrogate under a pruning budget
\citep{lecun1990obd,hassibi1993obs,nonnenmacher2022sosp,
kwon2022fast,meng2024osscar,gong2026cocurve}.

This workflow creates an evaluation gap. The surrogate is often checked on a
broad sample of masks. The selector, however, searches for masks with unusually
favorable surrogate values. An error that is rare under broad sampling can be
common among the masks that compete for selection. Average error and rank
correlation can still look strong. The selected mask can nevertheless be poor.

The problem is not unique to pruning. Decision-focused learning evaluates a
prediction through the choice it induces. Offline optimization studies how a
selector can exploit errors in weakly supported regions. The optimizer's curse
describes a related selection effect
\citep{smith2006optimizer,wilder2019decision,elmachtoub2022smart,
trabucco2021conservative,tan2025offline,tsiourvas2024curse}. Structured pruning
adds a useful distinction. A surrogate estimator can fail even when its search
algorithm works as intended. A selector can also fail even when the surrogate
is accurate on the relevant masks. These mechanisms require different tests.

PruneShift is an evaluation framework, not a new pruning score. It asks three
questions. Is the surrogate accurate across generic feasible masks? Is it
accurate near the masks reached by selection? Does the selected decision perform
well against declared alternatives on independent data? We call these the
broad, selector-neighborhood, and comparison domains. Figure
\ref{fig:framework-theory} summarizes the separation.

The paper makes four contributions.

\begin{enumerate}[leftmargin=1.55em,label=\arabic*.]
  \item We define a decision reliability evaluation for pruning surrogates. It separates broad
        prediction, selector-neighborhood prediction, and finite comparison
        regret. It also separates estimator error from selector error.
  \item We prove an inversion between rank agreement and decision quality. We
        then derive sufficient guarantees using coverage, comparison mass,
        uniform error, selector suboptimality, and decision margin.
  \item We give two operational routes to stronger evidence. A fixed-look
        finite-pool certificate bounds excess cost within a fixed pool.
        Independent confirmation directly tests a prespecified comparison. These
        routes answer different questions and are not presented as substitutes.
  \item We test the framework through external transport, finite-pool
        selection, a controlled coverage intervention, and a public OSSCAR
        reconstruction study. The results contain both positive and null
        findings. Together they show which claims do and do not transfer.
\end{enumerate}

\section{Related Work}

Structured pruning began with local sensitivity and curvature estimates
\citep{lecun1990obd,hassibi1993obs}. Later methods learned masks during
training or used sparsity signals at initialization
\citep{frankle2019lottery,sanh2020movement}. Transformer pruning has removed
attention heads, feed-forward groups, and larger blocks
\citep{michel2019sixteen,voita2019heads,lagunas2021block,xia2022cofi}.
Recent methods use global objectives, second-order information,
reconstruction, or discrete optimization
\citep{nonnenmacher2022sosp,kwon2022fast,kurtic2023ziplm,
vanderouderaa2024surgeon,meng2024osscar,wang2026grasprune,
huang2026ddp,wang2026global}. This literature mainly improves the pruning rule. PruneShift
instead studies the evidence needed to trust the rule's selected mask.

Structured units interact across heads, layers, and feed-forward groups.
Curvature, Shapley values, Harsanyi interactions, and quadratic objectives
represent some of these effects
\citep{held2023shapley,qu2025harsanyi,gong2026cocurve}. Head importance and
prunability can also diverge \citep{budhraja2020weak}. Search introduces a
second source of variation. Greedy construction and local exchange can return
different masks under the same objective \citep{zimmer2025sparseswaps}. Our
experiments therefore cross matched estimators and selectors. We use these
methods as evaluation subjects, not as novelty claims.

Decision-focused learning measures a prediction through the decision it
supports \citep{wilder2019decision,elmachtoub2022smart,mandi2022decision}.
Surrogate-based combinatorial optimization and contextual optimization use a
similar distinction
\citep{ferber2023surco,rosenfeld2018optimize,bennouna2025contextual}.
Offline model optimization further shows how selection can exploit prediction
errors in regions with weak support
\citep{trabucco2021conservative,tan2025offline}. Recent evaluation work has
also paired predictive fit with regret or top-decision quality
\citep{heuton2025decision,cieslak2026decision}. PruneShift brings this view to
structured pruning. Its main addition is an explicit separation between broad
masks, selector neighborhoods, and prespecified comparisons, together with tests for
the transfer between them.

\section{Problem Formulation and Evaluation Domains}

\subsection{Structured pruning as surrogate optimization}

Let \(z\in\feasible\subseteq\{0,1\}^{d}\) denote a feasible structured mask.
The set \(\feasible\) fixes the pruning cardinality and any layer constraints.
For a data distribution \(P_X\), the population loss is
\begin{equation}
  \loss(z)=\E_{X\sim P_X}[\ell(X;z)].
\end{equation}
For a finite split \(D\), we write the corresponding empirical mean as
\begin{equation}
  \bar{\loss}_{D}(z)
  =
  \frac{1}{|D|}\sum_{x\in D}\ell(x;z).
\end{equation}
Direct evaluation of \(\loss(z)\) for every \(z\in\feasible\) is usually
intractable. A pruning method constructs a surrogate
\(\widehat{\loss}(z)\) and returns
\begin{equation}
  \widehat z\in\arg\min_{z\in\feasible}\widehat{\loss}(z).
\end{equation}
The estimator defines \(\widehat{\loss}\). The selector determines how its
constrained minimum is approximated. We treat them as separate components.

For any \(A\subseteq\feasible\) that contains \(\widehat z\), define the
decision regret
\begin{equation}
  R_A(\widehat z)
  =
  \loss(\widehat z)-\min_{z\in A}\loss(z).
  \label{eq:domain-regret}
\end{equation}
When \(A=\feasible\), this is global regret. In experiments, \(A\) is a
finite prespecified comparison domain. The resulting regret is directly
estimable from held out data but deliberately scoped.

\subsection{Three non-interchangeable domains}

PruneShift distinguishes the following objects.

\begin{description}[leftmargin=1.2em,style=nextline]
  \item[Broad validation distribution \(Q_{\mathrm U}\).]
  This distribution samples generic feasible masks. It supports mean error,
  rank correlation, tail recall, and pairwise ordering.
  \item[Selector neighborhood distribution \(Q_{\mathrm N}\).]
  This distribution balances neighborhoods around fixed selector outputs.
  It measures the errors that search is more likely to expose.
  \item[Decision comparison domain \(C_{\mathrm S}\).]
  This finite set contains a selected mask and declared alternatives.
  Evaluating task loss on this set measures the actual decision in a known
  domain.
\end{description}

For two fixed cardinality masks with selected unit sets \(S\) and \(T\), we
use the half Hamming distance
\begin{equation}
  d_{\mathrm{hh}}(S,T)
  =
  \frac12|S\triangle T|
  =
  |S\setminus T|.
\end{equation}
The encoder neighborhoods use shells of radii \(1,2,3\) and assign equal
weight to every center and radius. This prevents a large neighborhood from
dominating a local summary.

The pointwise surrogate error is
\begin{equation}
  e(z)=\widehat{\loss}(z)-\loss(z).
\end{equation}
An average of \(e(z)^2\) under \(Q_{\mathrm U}\) is not automatically an
average under \(Q_{\mathrm N}\). An average under either distribution is not
automatically a uniform error bound on \(C_{\mathrm S}\). The next section
gives sufficient conditions for each transfer.

\begin{figure}[!tbp]
\centering
\includegraphics[width=\textwidth]{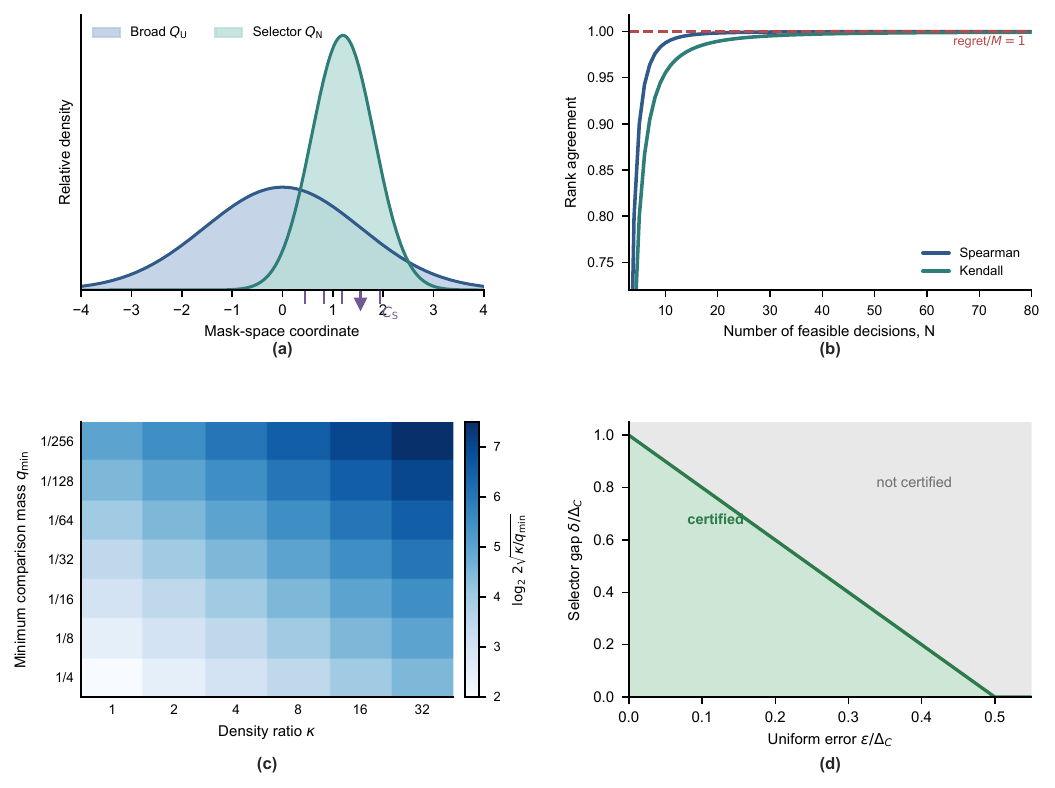}
\caption{\textbf{PruneShift separates predictive fidelity from decision
quality.}
\textbf{(a)} The broad distribution \(Q_{\mathrm U}\), selector-neighborhood
distribution \(Q_{\mathrm N}\), and fixed comparison set \(C_{\mathrm S}\)
define different evaluation domains. Transfer requires explicit coverage and
comparison mass.
\textbf{(b)} In Proposition~\ref{prop:inversion}, Spearman and Kendall
agreement approach one as the number of decisions grows, while normalized
regret remains one.
\textbf{(c)} The exact multiplier in Proposition
\ref{prop:finite-transfer} increases with the density ratio \(\kappa\) and
with smaller comparison mass \(q_{\min}\).
\textbf{(d)} The shaded region satisfies the exact-recovery condition after
normalizing uniform error and selector gap by the decision margin. Panels
\textbf{(a)}, \textbf{(c)}, and \textbf{(d)} are analytic illustrations rather
than empirical measurements.}
\label{fig:framework-theory}
\end{figure}

\section{Theory: From Fidelity to a Decision Guarantee}
\label{sec:theory}

\subsection{A decisive inversion}

\begin{proposition}[Near perfect rank agreement with arbitrary regret]
\label{prop:inversion}
For every \(M>0\) and every \(N\geq3\), there are true losses and surrogate
scores on \(N\) feasible decisions such that surrogate minimization incurs
regret \(M\), while Spearman correlation is
\begin{equation}
  \rho
  =
  1-\frac{12}{N(N^2-1)},
\end{equation}
Kendall correlation is
\begin{equation}
  \tau
  =
  1-\frac{4}{N(N-1)},
\end{equation}
and root mean squared error is \(M\sqrt{2/N}\).
\end{proposition}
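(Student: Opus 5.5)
The plan is to build an explicit instance in which the surrogate differs from the true loss only by swapping the values of the two best decisions, and to leave every other decision untouched. Take \(N\) feasible decisions \(z_1,\dots,z_N\) with true losses \(\loss(z_k)=(k-1)M\), so \(\loss(z_1)=0\), \(\loss(z_2)=M\), and \(\loss(z_k)=(k-1)M\geq 2M\) for \(k\geq3\). Define the surrogate by \(\widehat{\loss}(z_1)=M\), \(\widehat{\loss}(z_2)=0\), and \(\widehat{\loss}(z_k)=\loss(z_k)\) for \(k\geq3\). All true values are distinct and all surrogate values are distinct, so both rankings are strict and Spearman and Kendall need no tie corrections.

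First I would verify the decision quantities. The surrogate has the unique minimizer \(\widehat z=z_2\), because every other surrogate value is at least \(M>0\). The true minimum over the feasible set is \(0\), attained at \(z_1\). Hence the regret in \eqref{eq:domain-regret} with \(A=\feasible\) equals \(\loss(z_2)-0=M\). For the error, \(e(z_1)=M\), \(e(z_2)=-M\), and \(e(z_k)=0\) otherwise. The mean squared error is therefore \(2M^2/N\), which gives root mean squared error \(M\sqrt{2/N}\).

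Next come the rank statistics. The true ranking is \(z_1,z_2,z_3,\dots,z_N\). The surrogate ranking is \(z_2,z_1,z_3,\dots,z_N\), because \(\widehat{\loss}(z_1)=M<2M\leq\widehat{\loss}(z_k)\) for \(k\geq3\). So the rank differences are \(d_1=1\), \(d_2=-1\), and \(d_k=0\) otherwise, with \(\sum d_k^2=2\). The tie-free formula \(\rho=1-6\sum d_k^2/(N(N^2-1))\) then yields \(1-12/(N(N^2-1))\). For Kendall, the only discordant pair among the \(\binom N2\) pairs is \((z_1,z_2)\). Therefore \(\tau=(\binom N2-2)/\binom N2=1-4/(N(N-1))\).

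The only delicate point is making sure the swapped pair does not create further discordances or ties with the remaining decisions. This is why I place every other true loss strictly above \(M\), the largest of the two swapped surrogate values. The hypothesis \(N\geq3\) simply ensures there are decisions beyond the swapped pair, so both correlations tend to one as \(N\to\infty\). Normalizing regret by the true loss range among the top two decisions, which is \(M\), gives normalized regret one, matching the figure's description. Since \(M\) is arbitrary, the regret can be made arbitrary while rank agreement approaches one.
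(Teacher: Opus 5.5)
Your construction is correct and matches the paper's proof. The paper likewise assigns the two best decisions true losses $0$ and $M$, puts all remaining losses strictly above $M$, and lets the surrogate swap only those two values; it then computes $\sum d_k^2=2$, a single discordant pair, and $\mathrm{RMSE}=M\sqrt{2/N}$ as you do, and your choice $\loss(z_k)=(k-1)M$ is a concrete instance of that scheme.
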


\begin{proof}
Assign the two best true decisions losses \(0\) and \(M\). Assign all
remaining decisions distinct losses larger than \(M\). Let the surrogate swap
only the first two values and leave the other \(N-2\) values unchanged. The
surrogate therefore selects the decision with true loss \(M\).

The two swapped ranks each move by one position, so the sum of squared rank
displacements is \(2\). Spearman correlation is consequently
\begin{equation}
  1-\frac{6(2)}{N(N^2-1)}
  =
  1-\frac{12}{N(N^2-1)}.
\end{equation}
Exactly one of the \(N(N-1)/2\) pairs is discordant. Kendall correlation is
\begin{equation}
  1-\frac{2}{N(N-1)/2}
  =
  1-\frac{4}{N(N-1)}.
\end{equation}
The construction has no ties, so the usual \(\tau_a\) and \(\tau_b\)
definitions coincide.
Only two score errors are nonzero, with magnitudes \(M\) and \(M\). Hence
\begin{equation}
  \operatorname{RMSE}
  =
  \sqrt{\frac{M^2+M^2}{N}}
  =
  M\sqrt{\frac{2}{N}}.
\end{equation}
The selected decision has regret \(M\). For fixed \(M\), both rank
disagreement and RMSE vanish as \(N\) grows, whereas regret does not.
\end{proof}

Proposition~\ref{prop:inversion} is an insufficiency result. It does not say
that correlation is useless. It says that aggregate fidelity needs a
coverage or separation condition before it can certify an argmin.

\subsection{Transferring error across mask distributions}

\begin{proposition}[Density ratio error transfer]
\label{prop:density-transfer}
Let \(P\) and \(Q\) be distributions on \(\feasible\), with \(Q\ll P\).
Suppose
\begin{equation}
  \frac{dQ}{dP}\leq\kappa
  \qquad P\text{-almost surely}.
\end{equation}
Then
\begin{equation}
  \E_Q[e(z)^2]\leq\kappa\,\E_P[e(z)^2]
\end{equation}
and
\begin{equation}
  \E_Q[|e(z)|]
  \leq
  \sqrt{\kappa\,\E_P[e(z)^2]}.
\end{equation}
\end{proposition}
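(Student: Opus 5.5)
The plan is a change of measure followed by Jensen's inequality. Both steps are routine once the Radon--Nikodym derivative is in place.

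\textbf{Step 1: the squared-error bound.} Since \(Q\ll P\), the derivative \(w=dQ/dP\) exists and is nonnegative. Any nonnegative measurable function \(f\) on \(\feasible\) then satisfies \(\E_Q[f]=\E_P[w f]\). Taking \(f=e(z)^2\), which is nonnegative, and using \(w\leq\kappa\) \(P\)-almost surely gives
\begin{equation*}
  \E_Q[e(z)^2]=\E_P\bigl[w(z)\,e(z)^2\bigr]\leq\kappa\,\E_P[e(z)^2].
\end{equation*}
If \(\E_P[e(z)^2]=\infty\), the bound holds trivially. Nonnegativity lets us apply Tonelli's theorem, so no integrability assumption is needed. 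When \(\feasible\) is finite, as it is for masks, the derivative is simply \(w(z)=Q(z)/P(z)\) on the support of \(P\). The identity is then a finite sum, and \(Q\ll P\) ensures that \(Q\) puts no mass outside that support.

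\textbf{Step 2: the absolute-error bound.} Apply Cauchy--Schwarz (equivalently, Jensen for the concave square root) under \(Q\):
\begin{equation*}
  \E_Q[|e(z)|]\leq\sqrt{\E_Q[e(z)^2]}.
\end{equation*}
Substituting the bound from Step 1 inside the monotone square root yields \(\sqrt{\kappa\,\E_P[e(z)^2]}\).

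\textbf{Main obstacle.} No step is genuinely difficult. The only point needing care is the change-of-measure identity: it must be stated with \(Q\ll P\) and the almost-sure bound on \(dQ/dP\), so that regions where \(P\) has no mass cannot contribute under \(Q\). I would note that the argument uses nothing about \(e\) beyond measurability. The result therefore applies equally to \(Q_{\mathrm U}\to Q_{\mathrm N}\) transfer, or to any pair of mask distributions.
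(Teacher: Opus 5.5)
Your proposal is correct and follows the paper's proof: change of measure with \(w=dQ/dP\le\kappa\) to get \(\E_Q[e^2]\le\kappa\,\E_P[e^2]\), then Cauchy--Schwarz under \(Q\) followed by the monotone square root. Your added remarks on nonnegativity/Tonelli and on the finite-support form of \(w\) are careful refinements; the argument itself is unchanged.
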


\begin{proof}
Let \(w=dQ/dP\). Change of measure gives
\begin{equation}
  \E_Q[e^2]
  =
  \E_P[we^2]
  \leq
  \kappa\E_P[e^2].
\end{equation}
Cauchy--Schwarz gives
\begin{equation}
  \E_Q|e|
  \leq
  \sqrt{\E_Q[e^2]}
  \leq
  \sqrt{\kappa\E_P[e^2]}.
\end{equation}
\end{proof}

Take \(P=Q_{\mathrm U}\) and \(Q=Q_{\mathrm N}\). The coefficient
\(\kappa\) quantifies the shift from broad validation to selector
neighborhoods. If \(Q_{\mathrm U}\) is uniform on \(N\) masks and
\(Q_{\mathrm N}\) is a point mass, then \(\kappa=N\). A single tail error can
therefore receive an \(N\)-fold amplification in mean squared error.

\subsection{Uniform error, margin, and finite comparison regret}

Let \(C\subseteq\feasible\), and define
\begin{equation}
  z_C^\star\in\arg\min_{z\in C}\loss(z),
  \qquad
  \widehat z_C\in\arg\min_{z\in C}\widehat{\loss}(z).
\end{equation}
For \(\delta\geq0\), a feasible output \(\widetilde z_C\in C\) is a
\(\delta\)-approximate surrogate minimizer if
\begin{equation}
  \widehat{\loss}(\widetilde z_C)
  \leq
  \min_{z\in C}\widehat{\loss}(z)+\delta.
  \label{eq:delta-selector}
\end{equation}
The quantity \(\delta\) measures selector optimization error in the same units
as the surrogate. It is distinct from the estimator error \(e(z)\).

\begin{lemma}[Uniform comparison error]
\label{lem:uniform-error}
If
\begin{equation}
  \sup_{z\in C}|e(z)|\leq\varepsilon,
\end{equation}
then
\begin{equation}
  \loss(\widehat z_C)-\loss(z_C^\star)\leq2\varepsilon.
\end{equation}
\end{lemma}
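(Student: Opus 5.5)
The plan is a three-link chain that moves from true loss to surrogate and back, using the uniform error bound at the two endpoints and the optimality of \(\widehat z_C\) in the middle. Both \(\widehat z_C\) and \(z_C^\star\) lie in \(C\), so the hypothesis \(\sup_{z\in C}|e(z)|\leq\varepsilon\) applies to each. Since \(e(z)=\widehat{\loss}(z)-\loss(z)\), it gives the two-sided sandwich
\begin{equation}
  \widehat{\loss}(z)-\varepsilon
  \leq
  \loss(z)
  \leq
  \widehat{\loss}(z)+\varepsilon
  \qquad\text{for all } z\in C.
\end{equation}

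The steps, in order, are as follows.
\begin{enumerate}[leftmargin=1.55em,label=\arabic*.]
  \item Apply the upper half of the sandwich at \(\widehat z_C\) to get \(\loss(\widehat z_C)\leq\widehat{\loss}(\widehat z_C)+\varepsilon\).
  \item Use that \(\widehat z_C\) minimizes \(\widehat{\loss}\) over \(C\) and that \(z_C^\star\in C\), so \(\widehat{\loss}(\widehat z_C)\leq\widehat{\loss}(z_C^\star)\).
  \item Apply the lower half of the sandwich at \(z_C^\star\) to get \(\widehat{\loss}(z_C^\star)\leq\loss(z_C^\star)+\varepsilon\).
\end{enumerate}
Chaining the three inequalities gives \(\loss(\widehat z_C)\leq\loss(z_C^\star)+2\varepsilon\). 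This is the claim, which equals the regret \(R_C(\widehat z_C)\) from \eqref{eq:domain-regret} with \(A=C\).

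There is no substantive obstacle. The only points needing care are the following.
\begin{itemize}[leftmargin=1.55em]
  \item The two uses of the error bound must be at points inside \(C\). This is why the lemma is stated with a supremum over \(C\) rather than over a broad distribution.
  \item The minimizers must exist. This is automatic when \(C\) is finite. Otherwise it is implicit in the definitions of \(z_C^\star\) and \(\widehat z_C\).
\end{itemize}
The bound also makes clear that the factor \(2\) comes from paying \(\varepsilon\) once at each endpoint.

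The same chain extends to a \(\delta\)-approximate minimizer as in \eqref{eq:delta-selector}. Step 2 is replaced by \(\widehat{\loss}(\widetilde z_C)\leq\widehat{\loss}(z_C^\star)+\delta\), which yields \(2\varepsilon+\delta\). This is likely the form used next in the paper.
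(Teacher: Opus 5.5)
Your proof is correct and matches the paper's argument: the paper uses the same chain $\loss(\widehat z_C)\leq\widehat{\loss}(\widehat z_C)+\varepsilon\leq\widehat{\loss}(z_C^\star)+\varepsilon\leq\loss(z_C^\star)+2\varepsilon$, which is your three steps in the same order. Your remark on the $\delta$-approximate extension is also exactly what the paper proves next in Proposition~\ref{prop:approx-selector}.
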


\begin{proof}
Surrogate optimality and the uniform error bound imply
\begin{equation}
\begin{aligned}
  \loss(\widehat z_C)
  &\leq
  \widehat{\loss}(\widehat z_C)+\varepsilon\\
  &\leq
  \widehat{\loss}(z_C^\star)+\varepsilon\\
  &\leq
  \loss(z_C^\star)+2\varepsilon.
\end{aligned}
\end{equation}
\end{proof}

\begin{proposition}[Estimator and approximate selector error]
\label{prop:approx-selector}
If Equation~\ref{eq:delta-selector} holds and
\begin{equation}
  \sup_{z\in C}|e(z)|\leq\varepsilon,
\end{equation}
then
\begin{equation}
  \loss(\widetilde z_C)-\loss(z_C^\star)
  \leq
  \delta+2\varepsilon.
  \label{eq:delta-uniform-bound}
\end{equation}
\end{proposition}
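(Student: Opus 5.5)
The argument is a three-step chain, the same as in Lemma~\ref{lem:uniform-error}, with one extra slack term for the selector. The quantities are $\widetilde z_C$, the approximate surrogate minimizer over $C$, and $z_C^\star$, the true minimizer over $C$. Both lie in $C$, so the uniform error bound $\sup_{z\in C}|e(z)|\leq\varepsilon$ applies to each. I will move from the true loss of $\widetilde z_C$ to the surrogate, use the $\delta$-approximate optimality of Equation~\ref{eq:delta-selector}, and then return from the surrogate to the true loss at $z_C^\star$.

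First, since $e(\widetilde z_C)\geq-\varepsilon$, we have $\loss(\widetilde z_C)\leq\widehat{\loss}(\widetilde z_C)+\varepsilon$. Second, Equation~\ref{eq:delta-selector} gives $\widehat{\loss}(\widetilde z_C)\leq\min_{z\in C}\widehat{\loss}(z)+\delta$. Because $z_C^\star\in C$, the minimum is at most $\widehat{\loss}(z_C^\star)$. Third, since $e(z_C^\star)\leq\varepsilon$, we have $\widehat{\loss}(z_C^\star)\leq\loss(z_C^\star)+\varepsilon$. Chaining the three inequalities yields
\begin{equation*}
  \loss(\widetilde z_C)\leq\loss(z_C^\star)+\delta+2\varepsilon,
\end{equation*}
which is Equation~\ref{eq:delta-uniform-bound}.

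There is no real obstacle. The only points needing care are that the minimum in Equation~\ref{eq:delta-selector} is over the same set $C$ that contains $z_C^\star$, and that the error bound is two-sided, so it can be used in opposite directions at the two endpoints. As a sanity check, setting $\delta=0$ recovers Lemma~\ref{lem:uniform-error}. The resulting bound keeps the estimator contribution $2\varepsilon$ and the selector contribution $\delta$ additively separate.
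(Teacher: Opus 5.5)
Your proof is correct and matches the paper's argument: the same chain from $\loss(\widetilde z_C)$ to $\widehat{\loss}(\widetilde z_C)+\varepsilon$, then to $\min_{z\in C}\widehat{\loss}(z)+\delta+\varepsilon$, then to $\widehat{\loss}(z_C^\star)+\delta+\varepsilon$, and finally to $\loss(z_C^\star)+\delta+2\varepsilon$. Each step uses the two-sided error bound or Equation~\ref{eq:delta-selector} in the same place the paper does.
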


\begin{proof}
Feasibility, approximate surrogate optimality, and the uniform error bound give
\begin{equation}
\begin{aligned}
  \loss(\widetilde z_C)
  &\leq
  \widehat{\loss}(\widetilde z_C)+\varepsilon\\
  &\leq
  \min_{z\in C}\widehat{\loss}(z)+\delta+\varepsilon\\
  &\leq
  \widehat{\loss}(z_C^\star)+\delta+\varepsilon\\
  &\leq
  \loss(z_C^\star)+\delta+2\varepsilon.
\end{aligned}
\end{equation}
\end{proof}

\begin{corollary}[Margin safe selection]
\label{cor:margin}
Suppose \(z_C^\star\) is unique and
\begin{equation}
  \Delta_C
  =
  \min_{z\in C\setminus\{z_C^\star\}}
  \bigl(\loss(z)-\loss(z_C^\star)\bigr)
  >
  2\varepsilon.
\end{equation}
Then every surrogate minimizer on \(C\) equals \(z_C^\star\).
\end{corollary}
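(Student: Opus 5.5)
The plan is to argue by contradiction, using Lemma~\ref{lem:uniform-error} as the only earlier ingredient, and implicitly under its hypothesis \(\sup_{z\in C}|e(z)|\leq\varepsilon\), since that is the \(\varepsilon\) appearing in the margin condition. Let \(\widehat z_C\) be any surrogate minimizer on \(C\). Lemma~\ref{lem:uniform-error} does not depend on which minimizer is chosen: its chain of inequalities uses only \(\widehat{\loss}(\widehat z_C)\leq\widehat{\loss}(z_C^\star)\). So every surrogate minimizer satisfies
\begin{equation}
  \loss(\widehat z_C)-\loss(z_C^\star)\leq 2\varepsilon .
\end{equation}

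Now suppose some surrogate minimizer \(\widehat z_C\neq z_C^\star\). Then \(\widehat z_C\in C\setminus\{z_C^\star\}\). By the definition of \(\Delta_C\) as a minimum over that set,
\begin{equation}
  \loss(\widehat z_C)-\loss(z_C^\star)\geq\Delta_C>2\varepsilon .
\end{equation}
This contradicts the displayed bound, so every surrogate minimizer equals \(z_C^\star\). Uniqueness of \(z_C^\star\) is what makes \(\Delta_C\) well defined and positive.

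There is no real obstacle; the only care point is well-definedness. If \(C\) is infinite, the minimum in \(\Delta_C\) should be read as an infimum, and the argument goes through unchanged because it only needs \(\loss(z)-\loss(z_C^\star)\geq\Delta_C\) for each \(z\neq z_C^\star\). If \(C=\{z_C^\star\}\), the conclusion is immediate. I would also remark that the argument extends to a \(\delta\)-approximate selector via Proposition~\ref{prop:approx-selector} when \(\Delta_C>\delta+2\varepsilon\), since that is the same contradiction with \(2\varepsilon\) replaced by \(\delta+2\varepsilon\).
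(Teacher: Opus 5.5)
Your argument is correct, but it is routed differently from the paper's. The paper does not invoke Lemma~\ref{lem:uniform-error}. It applies the pointwise bound \(|e|\leq\varepsilon\) to an arbitrary competitor \(z\neq z_C^\star\) to get \(\widehat{\loss}(z)-\widehat{\loss}(z_C^\star)\geq\loss(z)-\loss(z_C^\star)-2\varepsilon>0\), and concludes directly that \(z_C^\star\) is the unique surrogate minimizer. You instead start from an arbitrary surrogate minimizer, bound its regret by \(2\varepsilon\) via the lemma, and contradict the margin.

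Both routes rest on the same two inequalities: \(\widehat{\loss}\geq\loss-\varepsilon\) at the competitor and \(\widehat{\loss}\leq\loss+\varepsilon\) at \(z_C^\star\). Neither is stronger in substance, but they deliver slightly different things.
\begin{itemize}
\item The paper's direct form shows that \(z_C^\star\) is itself a surrogate minimizer, with a strict surrogate gap of at least \(\Delta_C-2\varepsilon\). This gives existence of a minimizer for free, which your contradiction argument does not establish on its own in your infinite-\(C\) reading. It is also exactly the structure reused in Corollary~\ref{cor:approx-margin}.
\item Your form is more modular. It shows that any regret guarantee on \(C\) strictly below \(\Delta_C\) forces exact recovery. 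It therefore transfers verbatim to Proposition~\ref{prop:approx-selector}, to Corollary~\ref{cor:approx-finite-transfer}, and, on its high-probability event, to Proposition~\ref{prop:finite-pool-certificate}.
\end{itemize}

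One small correction to your closing remark. Positivity of \(\Delta_C\) comes from the hypothesis \(\Delta_C>2\varepsilon\geq0\), not from uniqueness. In fact that hypothesis already forces \(z_C^\star\) to be unique, so the dependence runs the other way.
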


\begin{proof}
For any \(z\neq z_C^\star\),
\begin{equation}
  \widehat{\loss}(z)-\widehat{\loss}(z_C^\star)
  \geq
  \loss(z)-\loss(z_C^\star)-2\varepsilon
  >
  0.
\end{equation}
Thus \(z_C^\star\) is also the unique surrogate minimizer.
\end{proof}

\begin{corollary}[Margin safe approximate selection]
\label{cor:approx-margin}
Suppose \(z_C^\star\) is unique, Equation~\ref{eq:delta-selector} holds, and
\begin{equation}
  \Delta_C>\delta+2\varepsilon.
\end{equation}
Then \(\widetilde z_C=z_C^\star\).
\end{corollary}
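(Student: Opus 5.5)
The plan is to argue by contradiction, combining the regret bound of Proposition~\ref{prop:approx-selector} with the margin \(\Delta_C\). The hypotheses of Corollary~\ref{cor:approx-margin} are exactly those of Proposition~\ref{prop:approx-selector}: Equation~\ref{eq:delta-selector} holds and \(\sup_{z\in C}|e(z)|\leq\varepsilon\). We may therefore invoke Equation~\ref{eq:delta-uniform-bound} directly. This gives
\begin{equation}
  \loss(\widetilde z_C)-\loss(z_C^\star)\leq\delta+2\varepsilon .
\end{equation}

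Next, suppose for contradiction that \(\widetilde z_C\neq z_C^\star\). Since \(\widetilde z_C\in C\setminus\{z_C^\star\}\), the definition of the margin yields
\begin{equation}
  \loss(\widetilde z_C)-\loss(z_C^\star)\geq\Delta_C .
\end{equation}
Chaining the two displays gives \(\Delta_C\leq\delta+2\varepsilon\). This contradicts the hypothesis \(\Delta_C>\delta+2\varepsilon\), so \(\widetilde z_C=z_C^\star\).

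I do not expect a genuine obstacle. The only point to handle carefully is where the hypotheses enter. The uniqueness of \(z_C^\star\) is what makes \(\Delta_C\) a well-defined positive gap separating every other element of \(C\). If \(C=\{z_C^\star\}\), the minimum over the empty set should be read as \(+\infty\), and in that case the conclusion is trivial because \(\widetilde z_C\in C\). Finally, the strict inequality in the hypothesis is what produces the contradiction; a non-strict inequality would only give \(\Delta_C\leq\delta+2\varepsilon\), which is consistent.
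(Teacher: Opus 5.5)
Your proof is correct, but it takes a slightly different route from the paper's. The paper does not invoke Proposition~\ref{prop:approx-selector}. It repeats the pairwise argument of Corollary~\ref{cor:margin} in surrogate space, showing that for every \(z\neq z_C^\star\),
\[
\widehat{\loss}(z)-\widehat{\loss}(z_C^\star)\geq \loss(z)-\loss(z_C^\star)-2\varepsilon>\delta .
\]
Hence \(z_C^\star\) is the surrogate minimizer on \(C\). Every competitor lies more than \(\delta\) above it and so fails Equation~\ref{eq:delta-selector}.

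You instead work in true-loss space. You use the regret bound \(\delta+2\varepsilon\) as a black box and observe that any competitor would incur regret at least \(\Delta_C\). The two routes buy different things:
\begin{itemize}
  \item The paper's route is self-contained. It shows explicitly that the \(\delta\)-sublevel set of \(\widehat{\loss}\) on \(C\) collapses to \(\{z_C^\star\}\).
  \item Your route is modular. It needs only some valid upper bound on regret, whatever its source, and it isolates the principle that any such bound strictly below \(\Delta_C\) forces exact recovery.
\end{itemize}
For example, the same two lines turn Corollary~\ref{cor:approx-finite-transfer}, or Proposition~\ref{prop:finite-pool-certificate} on its high-probability event, into exact-recovery statements whenever \(\Delta_C\) exceeds the corresponding bound.

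Both arguments rely on the hypothesis \(\sup_{z\in C}|e(z)|\leq\varepsilon\). The corollary's statement leaves it implicit, and you were right to make it explicit.
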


\begin{proof}
For every \(z\neq z_C^\star\),
\begin{equation}
  \widehat{\loss}(z)-\widehat{\loss}(z_C^\star)
  \geq
  \loss(z)-\loss(z_C^\star)-2\varepsilon
  >
  \delta.
\end{equation}
Thus \(z_C^\star\) is the surrogate minimizer and every other point lies more
than \(\delta\) above it. No other point can satisfy Equation
\ref{eq:delta-selector}.
\end{proof}

The expectation to uniform step requires a second coverage quantity. Let
\(Q\) be a distribution on finite \(C\) with \(Q(z)>0\) for every \(z\in C\),
and define
\begin{equation}
  q_{\min}=\min_{z\in C}Q(z)>0.
\end{equation}

\begin{proposition}[Finite comparison transfer]
\label{prop:finite-transfer}
Under the density ratio condition of Proposition
\ref{prop:density-transfer},
\begin{equation}
  \loss(\widehat z_C)-\loss(z_C^\star)
  \leq
  2\sqrt{
    \frac{\kappa\,\E_P[e(z)^2]}{q_{\min}}
  }.
  \label{eq:coverage-bound}
\end{equation}
\end{proposition}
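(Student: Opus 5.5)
The plan is to turn the expectation bound of Proposition~\ref{prop:density-transfer} into a uniform bound on \(C\), and then apply Lemma~\ref{lem:uniform-error}. Throughout, \(Q\) is the comparison distribution supported on the finite set \(C\) with minimum mass \(q_{\min}\), and \(P\) is the broad distribution. The density ratio condition \(dQ/dP\le\kappa\) holds with \(Q\ll P\), so Proposition~\ref{prop:density-transfer} applies to this pair.

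\textbf{Step 1 (transfer in mean square).} Proposition~\ref{prop:density-transfer} gives
\begin{equation*}
  \E_Q[e(z)^2]\le\kappa\,\E_P[e(z)^2].
\end{equation*}

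\textbf{Step 2 (expectation to supremum).} This uses the finiteness of \(C\) and the positivity of \(Q\) on it. For any fixed \(z_0\in C\), every term in the sum defining \(\E_Q[e^2]\) is nonnegative, so dropping all terms except one gives
\begin{equation*}
  q_{\min}\,e(z_0)^2\le Q(z_0)\,e(z_0)^2\le\sum_{z\in C}Q(z)\,e(z)^2=\E_Q[e(z)^2].
\end{equation*}
Combining with Step 1,
\begin{equation*}
  \sup_{z\in C}|e(z)|\le\varepsilon:=\sqrt{\frac{\kappa\,\E_P[e(z)^2]}{q_{\min}}}.
\end{equation*}

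\textbf{Step 3 (apply the lemma).} Lemma~\ref{lem:uniform-error} with this \(\varepsilon\) yields
\begin{equation*}
  \loss(\widehat z_C)-\loss(z_C^\star)\le2\varepsilon,
\end{equation*}
which is exactly Equation~\ref{eq:coverage-bound}.

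The only step that needs care is Step 2. It relies on \(Q\) being supported on \(C\), so that the sum runs over \(C\), and on each point of \(C\) having mass at least \(q_{\min}>0\), so that the single-point lower bound is valid. It is also where the factor \(1/q_{\min}\) enters. No other obstacle is expected, since the remaining steps are direct invocations of earlier results.
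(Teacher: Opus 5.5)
Your proposal is correct and follows the paper's proof essentially step for step. Like the paper, you bound the single-point term \(Q(z)e(z)^2\) by \(\E_Q[e^2]\), divide by \(q_{\min}\), transfer to \(P\) via Proposition~\ref{prop:density-transfer}, and then invoke Lemma~\ref{lem:uniform-error}. Your remark that \(Q\) must be supported on \(C\), so that \(\E_Q[e^2]\) is the finite sum over \(C\), states explicitly the assumption the paper's proof uses implicitly.
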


\begin{proof}
For every \(z\in C\),
\begin{equation}
  Q(z)e(z)^2
  \leq
  \sum_{u\in C}Q(u)e(u)^2
  =
  \E_Q[e^2].
\end{equation}
Because \(Q(z)\geq q_{\min}\),
\begin{equation}
  \sup_{z\in C}|e(z)|
  \leq
  \sqrt{\frac{\E_Q[e^2]}{q_{\min}}}
  \leq
  \sqrt{\frac{\kappa\E_P[e^2]}{q_{\min}}}.
\end{equation}
Applying Lemma~\ref{lem:uniform-error} proves
Equation~\ref{eq:coverage-bound}.
\end{proof}

\begin{corollary}[Finite comparison transfer for an approximate selector]
\label{cor:approx-finite-transfer}
Under the conditions of Proposition~\ref{prop:finite-transfer}, any
\(\delta\)-approximate surrogate minimizer satisfies
\begin{equation}
  \loss(\widetilde z_C)-\loss(z_C^\star)
  \leq
  \delta+
  2\sqrt{
    \frac{\kappa\,\E_P[e(z)^2]}{q_{\min}}
  }.
  \label{eq:delta-coverage-bound}
\end{equation}
\end{corollary}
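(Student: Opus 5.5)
The plan is to reuse the uniform-error bound from the proof of Proposition~\ref{prop:finite-transfer} and then apply Proposition~\ref{prop:approx-selector} instead of Lemma~\ref{lem:uniform-error}.

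First I will set
\begin{equation}
  \varepsilon
  =
  \sqrt{\frac{\kappa\,\E_P[e(z)^2]}{q_{\min}}}.
\end{equation}
I will then recall the pointwise step: for every \(z\in C\),
\begin{equation}
  Q(z)e(z)^2\leq\E_Q[e^2].
\end{equation}
Dividing by \(Q(z)\geq q_{\min}>0\) and applying Proposition~\ref{prop:density-transfer} gives
\begin{equation}
  \sup_{z\in C}|e(z)|\leq\varepsilon.
\end{equation}
This needs \(Q\) to be supported on \(C\) with the density ratio condition holding relative to \(P\). Those are exactly the hypotheses inherited from Proposition~\ref{prop:finite-transfer}, so the bound carries over verbatim.

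Second, \(\widetilde z_C\in C\) satisfies Equation~\ref{eq:delta-selector}, and the uniform error bound on \(C\) holds with the \(\varepsilon\) above. So the hypotheses of Proposition~\ref{prop:approx-selector} are met. That proposition gives
\begin{equation}
  \loss(\widetilde z_C)-\loss(z_C^\star)\leq\delta+2\varepsilon.
\end{equation}
Substituting \(\varepsilon\) yields Equation~\ref{eq:delta-coverage-bound}.

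There is no real obstacle here. The only point to check is bookkeeping: the \(\delta\)-approximation is defined relative to \(\min_{z\in C}\widehat{\loss}(z)\), the same domain \(C\) on which the uniform bound is derived. Hence no error outside \(C\) enters.
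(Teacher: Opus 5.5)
Your proposal is correct and matches the paper's proof: take the uniform bound $\sup_{z\in C}|e(z)|\leq\sqrt{\kappa\,\E_P[e(z)^2]/q_{\min}}$ from the proof of Proposition~\ref{prop:finite-transfer}, use it as $\varepsilon$ in Proposition~\ref{prop:approx-selector}, and the stated bound follows. Your check that $\delta$ is measured against the surrogate minimum over the same set $C$ on which the uniform bound holds is the right one.
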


\begin{proof}
The proof of Proposition~\ref{prop:finite-transfer} establishes
\begin{equation}
  \sup_{z\in C}|e(z)|
  \leq
  \sqrt{\frac{\kappa\E_P[e(z)^2]}{q_{\min}}}.
\end{equation}
Substitution into Proposition~\ref{prop:approx-selector} gives Equation
\ref{eq:delta-coverage-bound}.
\end{proof}

The two factors have different meanings. A large \(\kappa\) indicates that
search concentrates where broad validation has little mass. A small
\(q_{\min}\) indicates that a decisive comparison candidate receives little
evaluation weight.

\begin{corollary}[Controlled mixture design]
\label{cor:mixture}
Let \(C\) be a finite comparison domain, let \(Q_q\) be supported on \(C\)
with minimum mass \(q_{\min}\), and let \(U\) have support disjoint from
\(C\). For
\begin{equation}
  P_{\lambda,q}
  =
  (1-\lambda)U+\lambda Q_q,
  \qquad 0<\lambda\leq1,
\end{equation}
the density ratio from \(P_{\lambda,q}\) to \(Q_q\) is
\(\kappa=1/\lambda\). Therefore
\begin{equation}
  R_C(\widehat z_C)
  \leq
  2\sqrt{
    \frac{\E_{P_{\lambda,q}}[e(z)^2]}
         {\lambda q_{\min}}
  }.
\end{equation}
\end{corollary}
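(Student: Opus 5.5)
The plan is to apply Proposition~\ref{prop:finite-transfer} with \(P=P_{\lambda,q}\) and \(Q=Q_q\), after checking its two hypotheses: \(Q_q\ll P_{\lambda,q}\) with \(dQ_q/dP_{\lambda,q}\leq 1/\lambda\) almost surely, and \(Q_q(z)\geq q_{\min}>0\) on all of \(C\). By definition of \(Q_q\) the second hypothesis holds. Once the density ratio is known to be exactly \(1/\lambda\), substituting \(\kappa=1/\lambda\) into Equation~\ref{eq:coverage-bound} gives the stated bound directly, since \(R_C(\widehat z_C)=\loss(\widehat z_C)-\loss(z_C^\star)\) by Equation~\ref{eq:domain-regret}.

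\textbf{Computing the density ratio.} Everything lives on the finite set \(\feasible\), so densities are ratios of point masses. Because \(U\) has support disjoint from \(C\) and \(Q_q\) is supported on \(C\), every \(z\in C\) satisfies
\begin{equation}
  P_{\lambda,q}(z)=(1-\lambda)U(z)+\lambda Q_q(z)=\lambda Q_q(z).
\end{equation}
This gives \(\lambda Q_q(z)\geq\lambda q_{\min}>0\). Absolute continuity follows, because \(P_{\lambda,q}(z)=0\) forces \(Q_q(z)=0\). The Radon--Nikodym derivative is therefore
\begin{equation}
  \frac{dQ_q}{dP_{\lambda,q}}(z)=
  \begin{cases}
    1/\lambda, & z\in C,\\
    0, & z\notin C,
  \end{cases}
\end{equation}
off the \(P_{\lambda,q}\)-null set. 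Its essential supremum is exactly \(1/\lambda\), and it is attained as long as \(C\) is nonempty. So the condition of Proposition~\ref{prop:density-transfer} holds with \(\kappa=1/\lambda\), and this is the best possible constant.

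\textbf{Concluding.} Plugging \(\kappa=1/\lambda\) into Proposition~\ref{prop:finite-transfer} yields
\begin{equation}
  2\sqrt{\E_{P_{\lambda,q}}[e^2]/(\lambda q_{\min})}.
\end{equation}
There is no real obstacle here. The only point needing care is the disjointness assumption: without it, \(U\) would add mass on \(C\) and the ratio would only be bounded by \(1/\lambda\) rather than equal to it. The inequality would still hold in that case, but the exact identification \(\kappa=1/\lambda\) relies on disjointness.
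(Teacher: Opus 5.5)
Your proposal is correct and follows the paper's proof: disjoint supports give \(P_{\lambda,q}(z)=\lambda Q_q(z)\) on \(C\), so the density ratio is \(1/\lambda\) there, and substituting \(\kappa=1/\lambda\) into Proposition~\ref{prop:finite-transfer} gives the bound. Your checks of absolute continuity and of the ratio vanishing off \(C\) spell out steps the paper leaves implicit, and your remark that only the inequality \(dQ_q/dP_{\lambda,q}\leq 1/\lambda\) is needed without disjointness is also correct.
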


\begin{proof}
For every \(z\in C\), disjoint support gives
\begin{equation}
  P_{\lambda,q}(z)=\lambda Q_q(z).
\end{equation}
Hence \(Q_q(z)/P_{\lambda,q}(z)=1/\lambda\) on \(C\).
Substitute \(\kappa=1/\lambda\) into Proposition
\ref{prop:finite-transfer}.
\end{proof}

For a \(\delta\)-approximate selector, Corollary
\ref{cor:approx-finite-transfer} gives the corresponding bound
\begin{equation}
  R_C(\widetilde z_C)
  \leq
  \delta+
  2\sqrt{
    \frac{\E_{P_{\lambda,q}}[e(z)^2]}
         {\lambda q_{\min}}
  }.
\end{equation}

When \(\lambda=0\), the comparison domain has no support under the validation
distribution and the density ratio is unbounded. This limiting case is a
designed failure of the sufficient condition, not a claim about the behavior
of every possible estimator.

\subsection{Operational scope of the sufficient conditions}

The preceding transfer results are population statements conditional on fixed
objects. The functions \(\loss\) and \(\widehat{\loss}\) must be finite and
measurable under the stated distributions. The comparison domain \(C\) must be
nonempty and finite, which guarantees that its minima exist. The evaluation
distribution \(Q\) must give every member of \(C\) positive mass, \(Q\ll P\)
must hold, and the displayed \(\kappa\) must be a valid essential upper bound
on \(dQ/dP\). These conditions cannot be inferred from broad test error alone.

If the surrogate was fitted from calibration data, these statements are
conditional on the fitted function. Likewise, \(\E_P[e^2]\) in Equations
\ref{eq:coverage-bound} and \ref{eq:delta-coverage-bound} is a population
mean, not its empirical plug-in estimate. Turning these equations into a
finite-sample certificate would additionally require an
independent evaluation sample and a valid upper confidence bound for this mean,
together with known or conservatively bounded values of \(\kappa\),
\(q_{\min}\), and \(\delta\). The value \(\delta\) must certify objective
suboptimality relative to the minimum of the same surrogate on \(C\).
Termination at a one-swap local optimum does not by itself supply such a
domain-wide certificate. Consequently, the natural-data studies below diagnose
decision behavior but do not report a numerical population guarantee from
Equation~\ref{eq:delta-coverage-bound}. The controlled coverage experiment
fixes \(P\), \(Q\), and \(C\) so that \(\kappa\) and \(q_{\min}\) are known;
it tests the predicted mechanism rather than claiming that the population
upper bound is tight.

For a fixed finite pool, an independent evaluation sample gives a direct bridge
that avoids estimating a natural density ratio. Let
\(\ell(z;\xi)\in[a,a+B]\), let
\(\loss(z)=\E[\ell(z;\xi)]\), and define the evaluation mean
\begin{equation}
  \overline\loss_n(z)=\frac1n\sum_{i=1}^n\ell(z;\xi_i).
\end{equation}
The candidate set and surrogate below are fixed before the independent and
identically distributed observations \(\xi_1,\ldots,\xi_n\) are used.

\begin{proposition}[Independent finite-pool certificate]
\label{prop:finite-pool-certificate}
Let \(|C|=K\), and set
\begin{equation}
  r_{n,K,\alpha}
  =
  B\sqrt{\frac{\log(2K/\alpha)}{2n}},
  \qquad
  \widehat\varepsilon_n
  =
  \max_{z\in C}|\widehat{\loss}(z)-\overline\loss_n(z)|.
\end{equation}
With probability at least \(1-\alpha\), every \(\delta\)-approximate
surrogate minimizer satisfies
\begin{equation}
  \loss(\widetilde z_C)-\loss(z_C^\star)
  \leq
  \delta+2\bigl(\widehat\varepsilon_n+r_{n,K,\alpha}\bigr).
  \label{eq:finite-pool-certificate}
\end{equation}
\end{proposition}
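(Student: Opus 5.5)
The plan is to reduce the statement to Proposition~\ref{prop:approx-selector} by producing, on a single high-probability event, a uniform bound on the surrogate error over \(C\). Specifically, I will show that with probability at least \(1-\alpha\),
\begin{equation*}
  \sup_{z\in C}|e(z)|
  =\sup_{z\in C}|\widehat{\loss}(z)-\loss(z)|
  \leq \widehat\varepsilon_n+r_{n,K,\alpha}.
\end{equation*}
On that event, Proposition~\ref{prop:approx-selector} with \(\varepsilon=\widehat\varepsilon_n+r_{n,K,\alpha}\) gives Equation~\ref{eq:finite-pool-certificate} for every \(\delta\)-approximate surrogate minimizer simultaneously. Proposition~\ref{prop:approx-selector} is deterministic once its hypothesis holds, so no further probability is spent on the choice of \(\widetilde z_C\).

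The uniform bound comes from the triangle inequality at each \(z\in C\):
\begin{equation*}
  |\widehat{\loss}(z)-\loss(z)|
  \leq|\widehat{\loss}(z)-\overline\loss_n(z)|+|\overline\loss_n(z)-\loss(z)|.
\end{equation*}
The first term is at most \(\widehat\varepsilon_n\) by definition, so it suffices to control \(\max_{z\in C}|\overline\loss_n(z)-\loss(z)|\).

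For each fixed \(z\), the variables \(\ell(z;\xi_i)\) are i.i.d., lie in an interval of length \(B\), and have mean \(\loss(z)\). The two-sided Hoeffding inequality therefore gives
\begin{equation*}
  \Pr\bigl(|\overline\loss_n(z)-\loss(z)|>t\bigr)\leq 2\exp(-2nt^2/B^2).
\end{equation*}
Setting \(t=r_{n,K,\alpha}\) makes the right side equal to \(\alpha/K\). A union bound over the \(K\) members of \(C\) then yields failure probability at most \(\alpha\) for the event \(\max_{z\in C}|\overline\loss_n(z)-\loss(z)|\leq r_{n,K,\alpha}\).

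The main point requiring care is the validity of applying Hoeffding pointwise. This requires that \(C\) and \(\widehat{\loss}\) be fixed independently of \(\xi_1,\ldots,\xi_n\), as the statement stipulates. Otherwise the candidate indices would be data dependent and the union bound over a fixed pool of size \(K\) would fail.

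It is also worth noting two further points. First, \(\widehat\varepsilon_n\) is itself random, but this causes no difficulty: the inequality \(|\widehat{\loss}-\overline\loss_n|\leq\widehat\varepsilon_n\) holds deterministically, so all randomness is confined to the Hoeffding event. Second, \(z_C^\star\) and the set of \(\delta\)-approximate minimizers are determined by the fixed \(\loss\) and \(\widehat{\loss}\) on the finite set \(C\), so they are well defined.
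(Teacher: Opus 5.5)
Your proposal is correct and follows the paper's proof step for step. Hoeffding's inequality with a union bound over the \(K\) fixed candidates gives \(\max_{z\in C}|\overline\loss_n(z)-\loss(z)|\leq r_{n,K,\alpha}\) with probability at least \(1-\alpha\); the triangle inequality turns this into the uniform error bound \(\widehat\varepsilon_n+r_{n,K,\alpha}\); and Proposition~\ref{prop:approx-selector} finishes, with your remarks on the random \(\widehat\varepsilon_n\) and the pre-fixed pool making explicit what the paper leaves implicit.
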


\begin{proof}
Hoeffding's inequality and a union bound over the \(K\) candidates give,
with probability at least \(1-\alpha\),
\begin{equation}
  \max_{z\in C}|\overline\loss_n(z)-\loss(z)|
  \leq r_{n,K,\alpha}.
\end{equation}
On this event, the triangle inequality yields
\begin{equation}
  \max_{z\in C}|\widehat{\loss}(z)-\loss(z)|
  \leq \widehat\varepsilon_n+r_{n,K,\alpha}.
\end{equation}
Proposition~\ref{prop:approx-selector} completes the proof.
\end{proof}

This proposition certifies only the declared finite pool. It requires the
surrogate, comparison set, selector rule, and \(\delta\) certificate to be
fixed independently of the evaluation observations. It neither proves
coverage of the full feasible space nor repairs adaptive reuse of an already inspected
confirmation set.

\subsection{What a finite comparison set can establish}

\begin{lemma}[Finite pool regret is a lower bound]
\label{lem:pool-lower}
Let \(\pool\subseteq\feasible\) and \(\widehat z\in\pool\). Define
\begin{equation}
  R_{\mathrm{pool}}
  =
  \loss(\widehat z)-\min_{z\in\pool}\loss(z)
\end{equation}
and
\begin{equation}
  R_{\mathrm{global}}
  =
  \loss(\widehat z)-\min_{z\in\feasible}\loss(z).
\end{equation}
Then
\begin{equation}
  0\leq R_{\mathrm{pool}}\leq R_{\mathrm{global}}.
\end{equation}
\end{lemma}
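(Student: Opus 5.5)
The plan is to prove the two inequalities separately, using only the monotonicity of a minimum under set inclusion. Throughout we need both minima to exist. The set \(\pool\) contains \(\widehat z\) and so is nonempty. Since \(\feasible\subseteq\{0,1\}^d\), both \(\pool\) and \(\feasible\) are finite, so the minima over them exist and are finite. This is consistent with the operational scope stated in the preceding subsection.

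For the lower bound \(0\leq R_{\mathrm{pool}}\), we use \(\widehat z\in\pool\). This membership gives
\begin{equation}
  \min_{z\in\pool}\loss(z)\leq\loss(\widehat z),
\end{equation}
and subtracting the minimum yields \(R_{\mathrm{pool}}\geq0\). This is the same observation that makes the regret \(R_A\) in Equation~\ref{eq:domain-regret} nonnegative for any \(A\) containing \(\widehat z\).

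For the upper bound, we use \(\pool\subseteq\feasible\). Any minimizer \(z_{\pool}\) over the pool is feasible, so
\begin{equation}
  \min_{z\in\feasible}\loss(z)\leq\loss(z_{\pool})=\min_{z\in\pool}\loss(z).
\end{equation}
Subtracting both sides from \(\loss(\widehat z)\) reverses the inequality and gives \(R_{\mathrm{pool}}\leq R_{\mathrm{global}}\). Combining the two steps gives the full chain \(0\leq R_{\mathrm{pool}}\leq R_{\mathrm{global}}\).

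Neither step is a real obstacle. The only care needed is to state explicitly that the minima exist, which follows from finiteness of \(\feasible\), or from nonemptiness together with attainment if one prefers to phrase it without finiteness. The interpretive consequence is worth a closing remark: a small measured pool regret does not certify small global regret, whereas a large pool regret already certifies at least that much global regret.
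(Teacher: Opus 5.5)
Your proof is correct and follows the paper's argument: the upper inequality comes from \(\min_{z\in\feasible}\loss(z)\leq\min_{z\in\pool}\loss(z)\) by set inclusion, and the lower one from \(\widehat z\in\pool\). Your check that both minima exist (since \(\feasible\subseteq\{0,1\}^d\) is finite and \(\pool\) is nonempty) is a harmless detail that the paper leaves implicit.
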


\begin{proof}
Because \(\pool\subseteq\feasible\),
\begin{equation}
  \min_{z\in\feasible}\loss(z)
  \leq
  \min_{z\in\pool}\loss(z).
\end{equation}
Subtracting both sides from \(\loss(\widehat z)\) gives the upper inequality.
The lower inequality follows because \(\widehat z\in\pool\).
\end{proof}

Positive pool regret disproves global optimality. Zero pool regret does not
establish global optimality, and the pool value is not an upper bound on the
unobserved global gap.

\subsection{A fixed look finite pool certificate}

Suppose a pool of \(K\) candidates is fixed before the selection sample is
observed. Let
\(Z_e=(Z_{1,e},\ldots,Z_{K,e})\) be independent across examples \(e\). For
each candidate \(i\), assume \(Z_{i,e}\in[0,1]\) and
\(\E[Z_{i,e}]=\mu_i\). Coordinates of \(Z_e\) may be dependent, so the proof
does not require candidate independence within an example. At \(J\) fixed
cumulative sample sizes, define
\begin{equation}
  r(n)
  =
  \sqrt{\frac{\log(2KJ/\alpha)}{2n}}.
  \label{eq:hoeffding-radius}
\end{equation}
At a look with \(n\) examples, candidate \(i\) has interval
\begin{equation}
  I_i(n)
  =
  [\widehat\mu_i(n)-r(n),\widehat\mu_i(n)+r(n)].
\end{equation}
At every nonfinal look, eliminate \(i\) only if its lower endpoint exceeds
the smallest upper endpoint among active candidates.

\begin{proposition}[Elimination safety and final pool excess]
\label{prop:finite-pool}
With probability at least \(1-\alpha\), the fixed look procedure satisfies:
\begin{enumerate}[leftmargin=1.5em]
  \item at least one population optimal candidate is never eliminated; and
  \item selecting the smallest final empirical mean gives
  \begin{equation}
    \mu_{\widehat i}-\min_i\mu_i
    \leq
    2r(n_{\mathrm{final}}).
  \end{equation}
\end{enumerate}
\end{proposition}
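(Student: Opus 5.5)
The plan is to define a single good event on which both claims hold deterministically. Let
\[
  \mathcal{G}=\bigcap_{j=1}^{J}\bigcap_{i=1}^{K}\bigl\{|\widehat\mu_i(n_j)-\mu_i|\leq r(n_j)\bigr\}.
\]
I will bound \(\Pr(\mathcal{G}^c)\) and then argue on \(\mathcal{G}\) alone.

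For the probability bound, the look sizes \(n_1<\cdots<n_J\) are fixed in advance. Hence each \(\widehat\mu_i(n_j)\) is the mean of \(n_j\) independent \([0,1]\) variables \(Z_{i,1},\ldots,Z_{i,n_j}\). Independence holds across examples; dependence between coordinates within an example is irrelevant, because each event concerns only one coordinate. Hoeffding's inequality gives
\[
  \Pr\bigl(|\widehat\mu_i(n_j)-\mu_i|>r(n_j)\bigr)\leq 2\exp\bigl(-2n_jr(n_j)^2\bigr)=\alpha/(KJ).
\]
A union bound over the \(KJ\) pairs then yields \(\Pr(\mathcal{G})\geq1-\alpha\).

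This is the step needing most care. The elimination decisions are data dependent, so the active set at each look is random. The union bound must therefore be taken over all candidates and all looks, not only over active ones. With the fixed schedule this is legitimate. No optional-stopping issue arises, because there is no data-dependent stopping time: the final look is \(n_J\).

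For claim 1, fix \(i^\star\in\arg\min_i\mu_i\). On \(\mathcal{G}\), at any nonfinal look \(n\) and for any candidate \(k\) still active,
\[
  \widehat\mu_{i^\star}(n)-r(n)\leq\mu_{i^\star}\leq\mu_k\leq\widehat\mu_k(n)+r(n).
\]
So the lower endpoint of \(i^\star\) never exceeds any upper endpoint, and in particular not the smallest one among active candidates. By induction over looks, \(i^\star\) is active at every look, including the final one. The induction is needed because the minimum is taken over the current active set, and it is harmless because that set always contains \(i^\star\).

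For claim 2, let \(\widehat i\) minimize the final empirical mean over the active set, which by claim 1 contains \(i^\star\). On \(\mathcal{G}\),
\[
  \mu_{\widehat i}\leq\widehat\mu_{\widehat i}(n_{\mathrm{final}})+r\leq\widehat\mu_{i^\star}(n_{\mathrm{final}})+r\leq\mu_{i^\star}+2r,
\]
with \(r=r(n_{\mathrm{final}})\). This is the same argument as Lemma~\ref{lem:uniform-error}, with \(\varepsilon=r(n_{\mathrm{final}})\).
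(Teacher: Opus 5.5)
Your proposal is correct and follows essentially the same route as the paper: Hoeffding's inequality with a union bound over all \(KJ\) candidate--look intervals, survival of an optimal candidate because its lower endpoint never exceeds any active upper endpoint, and the same three-step chain at the final look. Your explicit remarks about taking the union bound over all candidates rather than the random active set, and about the induction over looks, only spell out steps the paper leaves implicit.
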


\begin{proof}
For each candidate and each fixed look, Hoeffding's inequality gives
\begin{equation}
  \Pr\!\left(
    |\widehat\mu_i(n)-\mu_i|>r(n)
  \right)
  \leq
  2\exp(-2nr(n)^2)
  =
  \frac{\alpha}{KJ}.
\end{equation}
A union bound over \(KJ\) intervals gives a simultaneous coverage event
\(\mathcal E\) with probability at least \(1-\alpha\)
\citep{hoeffding1963inequalities}.

Let \(i^\star\in\arg\min_i\mu_i\). On \(\mathcal E\), its lower endpoint is
at most \(\mu_{i^\star}\). Every active candidate \(j\) has upper endpoint at
least \(\mu_j\geq\mu_{i^\star}\). The lower endpoint of \(i^\star\) therefore
cannot exceed the smallest active upper endpoint. An optimal candidate
survives every look.

At the final look, choose an active candidate \(\widehat i\) with the smallest
empirical mean. Because an optimal candidate remains active,
\begin{equation}
\begin{aligned}
  \mu_{\widehat i}
  &\leq
  \widehat\mu_{\widehat i}+r(n_{\mathrm{final}})\\
  &\leq
  \widehat\mu_{i^\star}+r(n_{\mathrm{final}})\\
  &\leq
  \mu_{i^\star}+2r(n_{\mathrm{final}}).
\end{aligned}
\end{equation}
This proves both statements.
\end{proof}

For \(K=128\), \(J=4\), \(\alpha=0.05\), and
\(n_{\mathrm{final}}=1024\),
\begin{equation}
  2r(n_{\mathrm{final}})
  =
  0.1392446425.
\end{equation}
This is a marginal 95\% guarantee for one setting because the union bound in
Proposition~\ref{prop:finite-pool} covers the \(KJ\) intervals within that
setting. It is not a joint 95\% statement over the four experimental settings.
A prospective Bonferroni calibration for a four-setting family would use
\(\alpha/4=0.0125\) in Equation~\ref{eq:hoeffding-radius}, giving
\begin{equation}
  2r_{\mathrm{four\ settings}}(1024)
  =
  0.1486495094.
\end{equation}
The experiment used the per-setting value. It therefore makes only
the marginal claim. This guarantee concerns excess cost within the pool. It
does not say that the pool contains a global optimum, that the bound is tight,
or that the selected candidate beats an external comparator. That last
question requires independent confirmation.

The theory now identifies the quantities that an evaluation must expose:
surrogate error, selector error, coverage, comparison mass, margin, and
independent task loss. We next define the surrogate and selector families used
to test these links.

\section{Surrogates and Selectors Under Evaluation}

\subsection{Measured finite difference surrogates}

The SQuAD study evaluates single unit and pair removals exactly on the
calibration split. Let \(L_0=\bar{\loss}_{\mathrm{cal}}(0)\),
\(L_i\) be the empirical calibration loss after removing unit \(i\), and
\(L_{ij}\) the corresponding loss after removing \(i\) and \(j\). Define
\begin{equation}
  c_i=L_i-L_0,
  \qquad
  q_{ij}=L_{ij}-L_i-L_j+L_0.
\end{equation}
Let \(\mathbf Q\) be symmetric with
\begin{equation}
  Q_{ij}=Q_{ji}=q_{ij}\quad(i\neq j),
  \qquad Q_{ii}=0.
\end{equation}
For removal vector \(m\in\{0,1\}^d\), the measured quadratic surrogate is
\begin{equation}
  \widehat\Delta_{\mathrm M}(m)
  =
  c^\top m+\frac12m^\top \mathbf Qm.
  \label{eq:measured-quadratic}
\end{equation}
The folded score assigns half of every pair term to each incident unit:
\begin{equation}
  s_i
  =
  c_i+\frac12\sum_{j\neq i}q_{ij},
  \qquad
  \widehat\Delta_{\mathrm S}(m)
  =
  \sum_i s_i m_i.
\end{equation}
The singleton and pair finite differences are exact on the calibration split.
The approximation occurs when their quadratic expansion is used for an
arbitrary multiunit mask.

\subsection{Gradient and empirical second moment surrogates}

Let \(z\) be a vector of logical gates and let
\begin{equation}
  g_n
  =
  \left.\nabla_z\ell_n(z)\right|_{z=\mathbf 1}
\end{equation}
be the per example gate gradient. Define
\begin{equation}
  \mu=\frac1{N_{\mathrm{cal}}}\sum_{n=1}^{N_{\mathrm{cal}}} g_n,
  \qquad
  F=\frac1{N_{\mathrm{cal}}}
  \sum_{n=1}^{N_{\mathrm{cal}}}g_ng_n^\top.
\end{equation}
The matrix \(F\) is the raw empirical second moment of gate gradients. We do
not interpret it as the population Fisher information or an exact Hessian.
Removing units changes gates by \(\Delta z=-m\). The three QQP surrogates are
\begin{align}
  \widehat\Delta_{\mathrm{FO}}(m)
  &=
  -\mu^\top m,
  \label{eq:fo}\\
  \widehat\Delta_{\mathrm{DF}}(m)
  &=
  -\mu^\top m+\frac12\operatorname{diag}(F)^\top m,
  \label{eq:df}\\
  \widehat\Delta_{\mathrm{QF}}(m)
  &=
  -\mu^\top m+\frac12m^\top Fm.
  \label{eq:qf}
\end{align}
The three forms separate first order, diagonal second moment, and full
quadratic information. The diagonal form uses \(m_i^2=m_i\) for binary
removal variables.

\subsection{Greedy construction and deterministic one swap search}

For a symmetric pair matrix \(\mathbf Q\) with zero diagonal, write the
quadratic objective on selected removal set \(S\) as
\begin{equation}
  J(S)
  =
  \sum_{i\in S}c_i
  +
  \sum_{\{i,j\}\subseteq S}q_{ij}.
\end{equation}
Forward greedy construction adds the feasible unit with the smallest
increment until the budget is reached. The local selector starts from the
matching greedy set and enumerates every feasible exchange
\begin{equation}
  S'=(S\setminus\{i\})\cup\{j\},
  \qquad i\in S,\quad j\notin S.
\end{equation}
The exact objective change is
\begin{equation}
  \Delta(i\to j\mid S)
  =
  c_j-c_i
  +
  \sum_{u\in S\setminus\{i\}}
  (q_{ju}-q_{iu}).
  \label{eq:swap-delta}
\end{equation}
At iteration \(t\), the selector accepts the best feasible exchange satisfying
\begin{equation}
  \Delta(i\to j\mid S_t)<-\eta_t.
\end{equation}
The numerical tolerance is study specific. SQuAD uses the relative
scale
\begin{equation}
  \eta_t^{\mathrm{SQuAD}}
  =
  10^{-12}\max\{1,|J(S_t)|\},
\end{equation}
whereas QQP uses the absolute tolerance
\begin{equation}
  \eta_t^{\mathrm{QQP}}=10^{-12}.
\end{equation}
Both implementations enumerate exchanges deterministically. Exact ties follow
lexicographic order; the QQP implementation also treats objective values
within its absolute tolerance as tied. The symbol \(\eta_t\) is a numerical
acceptance tolerance, not the estimator error \(\varepsilon\) in Proposition
\ref{prop:approx-selector} and not a comparison-domain suboptimality
certificate \(\delta\).

\begin{proposition}[Finite termination and local optimality]
\label{prop:local-search}
Starting from any feasible fixed cardinality set, the one-swap selector
terminates after finitely many accepted exchanges. Every iterate is feasible.
At termination, no feasible one-for-one exchange decreases the objective by
more than the terminal tolerance.
\end{proposition}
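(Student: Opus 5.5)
The plan rests on two facts about the search space and the acceptance rule. The feasible fixed-cardinality family is finite, and every accepted exchange strictly decreases \(J\) by more than a positive tolerance. Feasibility is handled first and directly. The starting set is feasible. The selector enumerates only feasible exchanges \(S'=(S\setminus\{i\})\cup\{j\}\) with \(i\in S\), \(j\notin S\), so each such exchange preserves cardinality. By the definition of the selector, only exchanges that also satisfy the layer constraints encoded in \(\feasible\) are admitted. Induction on the iteration count then shows that every iterate is feasible.

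For finite termination, I would first check that Equation~\ref{eq:swap-delta} equals \(J(S')-J(S)\). This requires expanding \(J\): the linear terms contribute \(c_j-c_i\). The pair terms involving \(i\) with \(u\in S\setminus\{i\}\) are removed, and those involving \(j\) with the same \(u\) are added; the pair \(\{i,j\}\) itself never appears because \(i\notin S'\) and \(j\notin S\). This step uses \(Q_{ii}=0\) and symmetry.

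An accepted move then gives \(J(S_{t+1})<J(S_t)-\eta_t\leq J(S_t)\), with \(\eta_t>0\) in both studies. The sequence \(J(S_t)\) is therefore strictly decreasing, so no set can be revisited. Since there are at most \(\binom{d}{k}\) feasible sets, the number of accepted exchanges is at most \(\binom{d}{k}-1\). This argument needs only strict decrease, not a uniform lower bound on the decrease, which sidesteps the fact that the SQuAD tolerance varies with \(|J(S_t)|\). As a secondary check, each iteration enumerates the finite list \(\{(i,j)\}\) of size at most \(k(d-k)\), so each iteration itself terminates. Deterministic tie-breaking, whether lexicographic order or the QQP absolute-tolerance ties, affects only which improving exchange is chosen, never whether one exists.

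Local optimality at termination is the contrapositive of the stopping rule. The procedure halts at \(S_T\) exactly when no feasible exchange satisfies \(\Delta(i\to j\mid S_T)<-\eta_T\). Hence every feasible one-for-one exchange satisfies \(J(S')-J(S_T)\geq-\eta_T\), meaning no exchange decreases the objective by more than the terminal tolerance.

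The main obstacle is interpretive rather than technical. Floating-point evaluation of Equation~\ref{eq:swap-delta} may not agree exactly with the true \(J\)-difference. I would state the result in exact arithmetic and remark that the tolerance is what guards against cycling caused by rounding. I would also note, following the paper's discussion, that this certifies only one-swap local optimality and does not supply the domain-wide \(\delta\) required by Proposition~\ref{prop:approx-selector}.
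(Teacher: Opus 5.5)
Your proposal is correct and follows the paper's proof essentially step for step: each accepted exchange keeps cardinality and is checked against the layer constraints, so feasibility is preserved; strict decrease of \(J\) rules out revisiting a mask in the finite feasible set; and local optimality is the contrapositive of the stopping rule. Your check that Equation~\ref{eq:swap-delta} equals \(J(S')-J(S)\) and your explicit bound on the number of accepted exchanges are harmless elaborations of steps the paper takes as given.
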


\begin{proof}
Every accepted exchange preserves cardinality and is checked against the same
layer constraints, so every iterate remains in the finite feasible set. The
acceptance rule strictly decreases \(J\). A previously visited mask cannot be
revisited because that would require the objective to recover an earlier
value. A strictly descending sequence over a finite set must terminate. At
termination, exhaustive enumeration contains no feasible exchange with
\(\Delta<-\eta_t\). This is exactly the stated local optimality condition up
to the numerical tolerance.
\end{proof}

This proposition is intentionally local. No global or certified optimality
claim is made. Crossing the same greedy and local selectors with the measured
and empirical second moment objectives keeps estimator and selector effects
separately interpretable.

\subsection{Restricted reconstruction}

OSSCAR optimizes a support while refitting the retained weights
\citep{meng2024osscar}. For an activation matrix \(X\) with \(T\) calibration
positions and a transposed weight matrix \(B\), define
\begin{equation}
  H=\frac1T X^\top X,
  \qquad
  \mathcal D=\{i:H_{ii}=0\},
  \qquad
  (B_{\mathrm{eff}})_{i:}
  =
  \begin{cases}
    0,&i\in\mathcal D,\\
    B_{i:},&i\notin\mathcal D,
  \end{cases}
\end{equation}
and then define
\begin{equation}
  \qquad
  \widetilde H
  =H+\gamma\operatorname{mean}(\operatorname{diag}H)I,
  \qquad
  G=\widetilde H B_{\mathrm{eff}},
\end{equation}
where dead coordinates are handled before damping and the reference
implementation uses \(\gamma=0.01\). For a retained coordinate support \(S\),
its restricted objective is
\begin{equation}
  f(S)
  =
  \min_{W:\,W_{S^c}=0}
  \left\{
    \frac12\operatorname{tr}(W^\top\widetilde H W)
    -\operatorname{tr}(W^\top G)
  \right\}.
  \label{eq:osscar-restricted-objective}
\end{equation}
Positive definiteness from damping gives the unique retained solution
\begin{equation}
  W_S^*=\widetilde H_{SS}^{-1}G_S,
  \qquad W_{S^c}^*=0,
  \qquad
  f(S)=-\frac12\operatorname{tr}
  \left(G_S^\top\widetilde H_{SS}^{-1}G_S\right).
  \label{eq:osscar-restricted-solve}
\end{equation}
Because \(G=\widetilde H B_{\mathrm{eff}}\), the excess over the dense unconstrained
solution is
\begin{equation}
  f(S)-f(\mathrm{all})
  =
  \frac12\operatorname{tr}
  \left((W_S^*-B_{\mathrm{eff}})^\top
  \widetilde H(W_S^*-B_{\mathrm{eff}})\right).
  \label{eq:osscar-restricted-excess}
\end{equation}
Equations~\ref{eq:osscar-restricted-objective} to
\ref{eq:osscar-restricted-excess} define the reconstruction quantity used in
our OSSCAR evaluation. They include the retained-weight refit and reference
damping. OSSCAR's search is algorithmic, so these equations do not imply that
its returned support is globally optimal among all supports.

These evaluation subjects span measured finite differences, gradient second
moments, deterministic local search, and a public reconstruction method. The
experiments below use them to test four distinct transfers from surrogate
evidence to a pruning decision.

\section{Experimental Design}

\begin{figure}[!tbp]
\centering
\includegraphics[width=\textwidth]{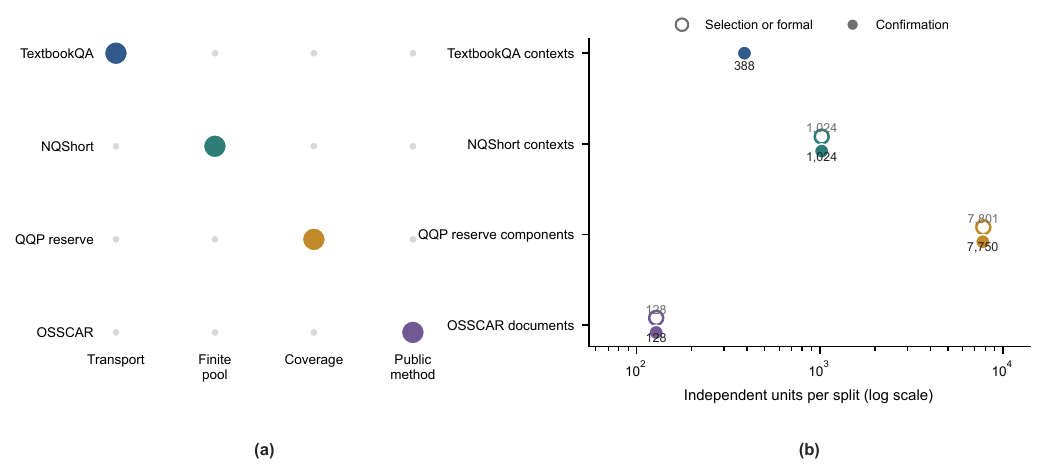}
\caption{\textbf{Four studies test distinct links between surrogate evidence
and a pruning decision.}
\textbf{(a)} TextbookQA tests external transport, Natural Questions tests a
fixed finite pool, QQP tests the coverage mechanism, and OPT-125M tests a
public reconstruction method. Large markers identify each study's primary
role.
\textbf{(b)} Independent units in the selection or primary split and in the
confirmation split. Open and filled markers distinguish the two roles.
TextbookQA shows only the 388 external confirmation contexts because its masks
and comparators were fixed on SQuAD. Units are passage contexts, connected
question components, or C4 documents as indicated on the vertical axis.}
\label{fig:study-architecture}
\end{figure}

Figure~\ref{fig:study-architecture} summarizes the four empirical studies and
their independent units.

\subsection{Common evaluation protocol}

A PruneShift study fixes five objects before confirmation: the feasible mask
set, surrogate, selector, comparison set, and statistical analysis. Primary
data may be used to fit a surrogate or select a decision. Confirmation data
cannot change that decision. Model outputs are reduced to the prespecified
experimental unit before inference.

The protocol distinguishes a scientific estimand from its implementation.
The following subsections define the data source, independent unit, estimand,
multiplicity family, and effect direction for each study. Encoder experiments
use task-tuned BERT-base and RoBERTa-base checkpoints
\citep{devlin2019bert,liu2019roberta}. We remove complete attention heads or
contiguous groups of 256 feed-forward channels. The decoder experiment uses
OPT-125M \citep{zhang2022opt}. It removes attention-head input blocks or
individual feed-forward neurons in selected layers.

\subsection{External TextbookQA confirmation}

The external study uses the text-only, answerable subset of TextbookQA from an
MRQA release \citep{kembhavi2017textbookqa,fisch2019mrqa}.
This subset is not the full multimodal TextbookQA benchmark. It contains 1,503
extractive QA examples in 389 passage contexts. Exact and normalized contexts
are disjoint from the SQuAD training and validation material used to construct
the masks \citep{rajpurkar2018squad2}.

The comparison inventory is fixed without using TextbookQA outcomes. One
comparator per setting is selected by the SQuAD analysis. It is therefore an
external comparator rather than a choice informed by TextbookQA. Five method
roles are computed from the fixed calibration and
surrogate inputs: the measured singleton role \(A\), empirical second-moment
greedy and local roles \(F\)-G and \(F\)-L, and measured finite-difference
greedy and local roles \(M\)-G and \(M\)-L. The local roles use the deterministic
one-swap selector in Proposition~\ref{prop:local-search}.

For QA example \(i\), the bounded cost is \(c_i=1-\mathrm{F1}_i\). Each
contrast is method cost minus the cost of its fixed comparator. Negative
values favor the method role. We first average QA costs within a passage
context and then average contexts equally. The primary analysis excludes one
prespecified near-duplicate context and uses 388 clusters. A sensitivity
analysis uses all 389.

The 20 contrasts form one multiplicity family. We draw 20,000 common
context-cluster bootstrap samples and use a studentized maximum absolute
statistic. The resulting intervals describe resampling stability for this
finite external collection. They are not distribution-free population
intervals.

\subsection{Natural Questions finite pool}

NaturalQuestionsShort comes from the same MRQA release and derives from the
Natural Questions benchmark \citep{kwiatkowski2019natural,fisch2019mrqa}. A
label-free content rule assigns complete passage contexts to three disjoint
sets. Selection and confirmation each contain 1,024 contexts. The remaining
contexts are untouched. All QA rows from one passage remain in the same set.

Each setting has a fixed pool of \(K=128\) candidates and \(J=4\) cumulative
looks. The observation for a candidate is its mean \(1-\mathrm{F1}\) within a
context, so every independent observation lies in \([0,1]\). Proposition
\ref{prop:finite-pool} therefore applies with \(B=1\). We report a
per-setting 95\% excess-cost bound and a separate four-setting Bonferroni
bound. The latter is required for a familywise statement.

Selection is completed before the confirmation roles are constructed. The
chosen role is then reproduced independently. Confirmation contrasts the
selected mask with one fixed comparator per setting. The primary interval is
a conservative finite-population Hoeffding interval without replacement. All four settings
are considered jointly.

\subsection{Controlled QQP coverage study}

The coverage study uses a held-out QQP reserve
\citep{wang2018glue}. Shared normalized questions induce connected components.
Components, not rows, are assigned to disjoint primary and confirmation
partitions. The primary partition contains 8,192 rows in 7,801 components. The
confirmation partition contains 8,192 rows in 7,750 components.

The design crosses six fixed checkpoints, two unit families, and two budgets.
This gives 24 cells. Each cell uses 13 prespecified mixture arms and 256 common
random designs. The arms vary the mass placed on a fixed comparison set while
leaving the estimator and selector definitions unchanged. Twelve adjacent
coverage contrasts per cell produce
\(24\times12\times256=73{,}728\) theory-calibration records.

For every record we compute the broad weighted squared error, exact density
ratio \(\kappa\), comparison mass \(q_{\min}\), comparison-set maximum error,
true decision margin, exact-selection indicator, and observed regret. The
selector optimization term is \(\delta=0\) in this exhaustive finite design.
We evaluate both the uniform and coverage inequalities from Section
\ref{sec:theory}.

The prespecified endpoints reduce the design to four architecture
and unit families. Within each family we study maximum surrogate error and
selected regret. For each outcome we report the weak-minus-strong coverage
contrast and the slope across ordered coverage levels. This gives 16
endpoints. Positive values support the coverage mechanism. Inference uses
20,000 common component-bootstrap samples and a maximum-statistic adjustment
across all endpoints.

\subsection{Restricted OSSCAR reconstruction}

The OSSCAR study uses the reference objective in Equations
\ref{eq:osscar-restricted-objective} to
\ref{eq:osscar-restricted-excess} \citep{meng2024osscar}. Every candidate
support is evaluated after solving the damped restricted system for retained
weights. Directly zeroing the original weights is not used as a substitute.

We study six OPT-125M layers, two unit families, and removal rates of 25\% and
50\%. This gives 24 cells. C4 validation documents are assigned without
overlap to 64 calibration documents, 128 primary documents, and 128
confirmation documents \citep{raffel2020exploring}. One fixed-length sequence
is taken from each document. The document is the inferential unit.

For each cell, 64 broad supports are matched to 64
selector-neighborhood supports. Primary fidelity compares local minus broad
absolute reconstruction mismatch for the mean, 90th percentile, and maximum.
Negative values indicate lower mismatch near the selector. The 72 cell
endpoints and three aggregate endpoints form one family.

OSSCAR and its task comparator are fixed before confirmation. The comparator
is selected from the declared primary comparison pool. It is not changed for
each confirmation document. The confirmation effect is
\begin{equation}
  \Delta_{\mathrm{task}}
  =
  \overline{\operatorname{NLL}}(m_{\mathrm{OSSCAR}})
  -
  \overline{\operatorname{NLL}}(m_{\mathrm{comp}}).
\end{equation}
Negative values favor OSSCAR. The 24 cell effects and one aggregate effect form
a second family. Both families use 20,000 common document-bootstrap samples
and simultaneous intervals.

\section{Results}

\subsection{External transport is heterogeneous}

Figure~\ref{fig:external-pool}(a) reports all 20 TextbookQA contrasts. Seven
simultaneous intervals lie below zero, six lie above zero, and seven cross
zero. The pattern differs by setting. BERT heads contain two method-favored
and three comparator-favored contrasts. All five BERT FFN contrasts are
inconclusive. RoBERTa heads again contain two method-favored and three
comparator-favored contrasts. RoBERTa FFN groups contain three method-favored
and two inconclusive contrasts.

\begin{figure}[!tbp]
\centering
\includegraphics[width=\textwidth]{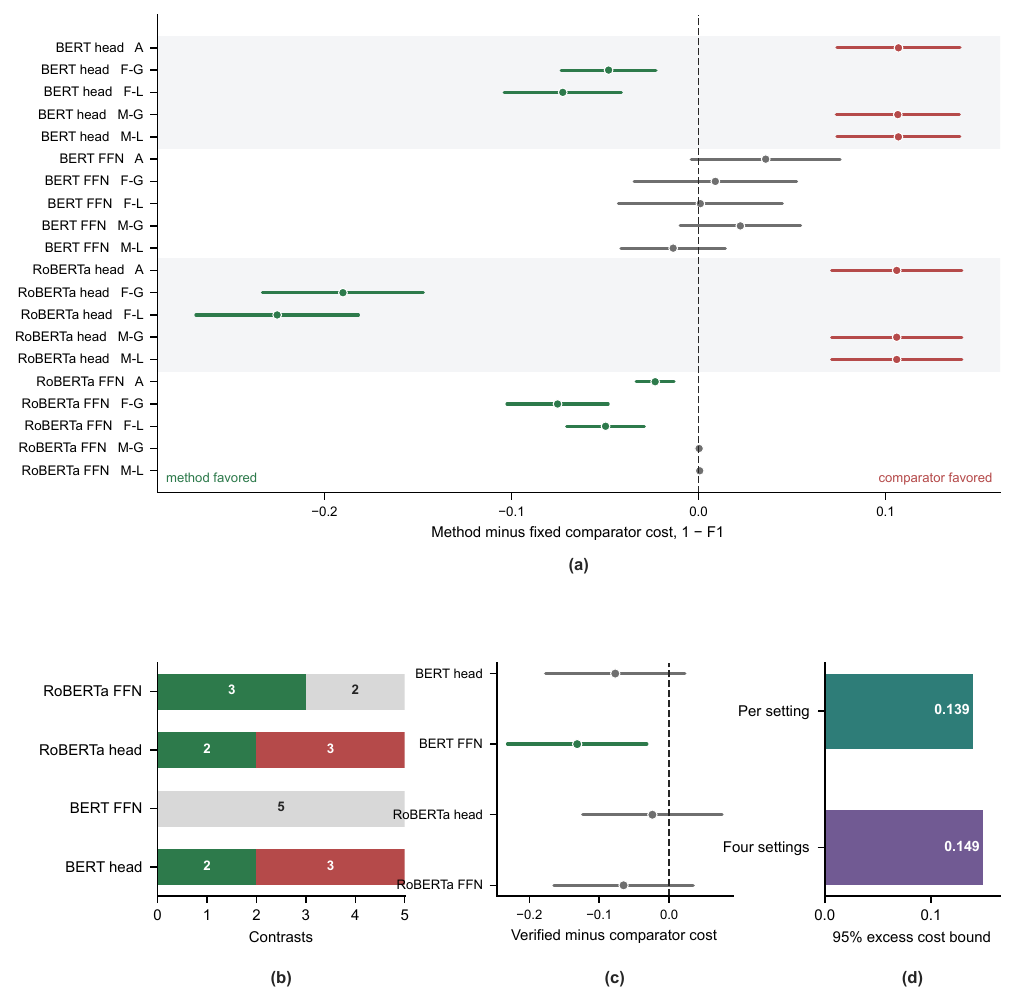}
\caption{\textbf{External confirmation is heterogeneous, and the finite-pool
result is scoped.}
\textbf{(a)} Method-minus-comparator cost on TextbookQA. Points are
equal-context estimates. Lines are simultaneous 95\% context-bootstrap
intervals across 20 fixed contrasts. Negative values favor the method role.
\textbf{(b)} Counts of interval directions by architecture and unit family.
\textbf{(c)} Selected-minus-comparator cost on 1,024 disjoint Natural Questions
confirmation contexts. Lines are strict familywise finite-population
intervals. Only BERT FFN lies below zero.
\textbf{(d)} The fixed-look pool certificate bounds excess cost by 0.139 within
one setting and by 0.149 across four settings. It does not certify superiority
over an external comparator.}
\label{fig:external-pool}
\end{figure}

The all-context sensitivity analysis preserves the qualitative conclusion.
The study therefore rejects a universal transport claim. It does not reject
every surrogate. Some fixed comparisons transfer strongly, while others
reverse or remain unresolved. The correct conclusion is conditional on
architecture, unit family, and comparison role.

\subsection{A valid finite-pool certificate does not imply universal improvement}

The Natural Questions certificate is numerically valid for the fixed pool.
The per-setting 95\% excess-cost bound is 0.1392. The four-setting familywise
bound is 0.1486. These values bound the selected candidate's excess cost
relative to the best member of the 128-candidate pool. They do not compare the
pool winner with a different pruning strategy.

Independent confirmation makes that second comparison. The BERT FFN estimate
is \(-0.1315\), with interval \([-0.2310,-0.0319]\). The other three intervals
cross zero: BERT heads give \(-0.0770\) with
\([-0.1765,0.0226]\), RoBERTa heads give \(-0.0239\) with
\([-0.1234,0.0757]\), and RoBERTa FFN groups give \(-0.0651\) with
\([-0.1647,0.0345]\). Strict improvement therefore holds in one of four
settings.

This result separates validity from usefulness. The certificate correctly
controls excess cost inside its declared pool. It does not guarantee that the
pool is strong enough to beat a fixed external comparator.

\subsection{Coverage changes decision error as predicted}

All 16 prespecified QQP confirmation intervals lie above zero
(Figure~\ref{fig:coverage}). Primary and confirmation signs agree for every
endpoint. We observe the same direction for maximum surrogate error and
selected regret, under both the extreme coverage contrast and the ordered
coverage slope. The pattern holds for BERT and RoBERTa, and for attention heads
and FFN groups.

\begin{figure}[!tbp]
\centering
\includegraphics[width=\textwidth]{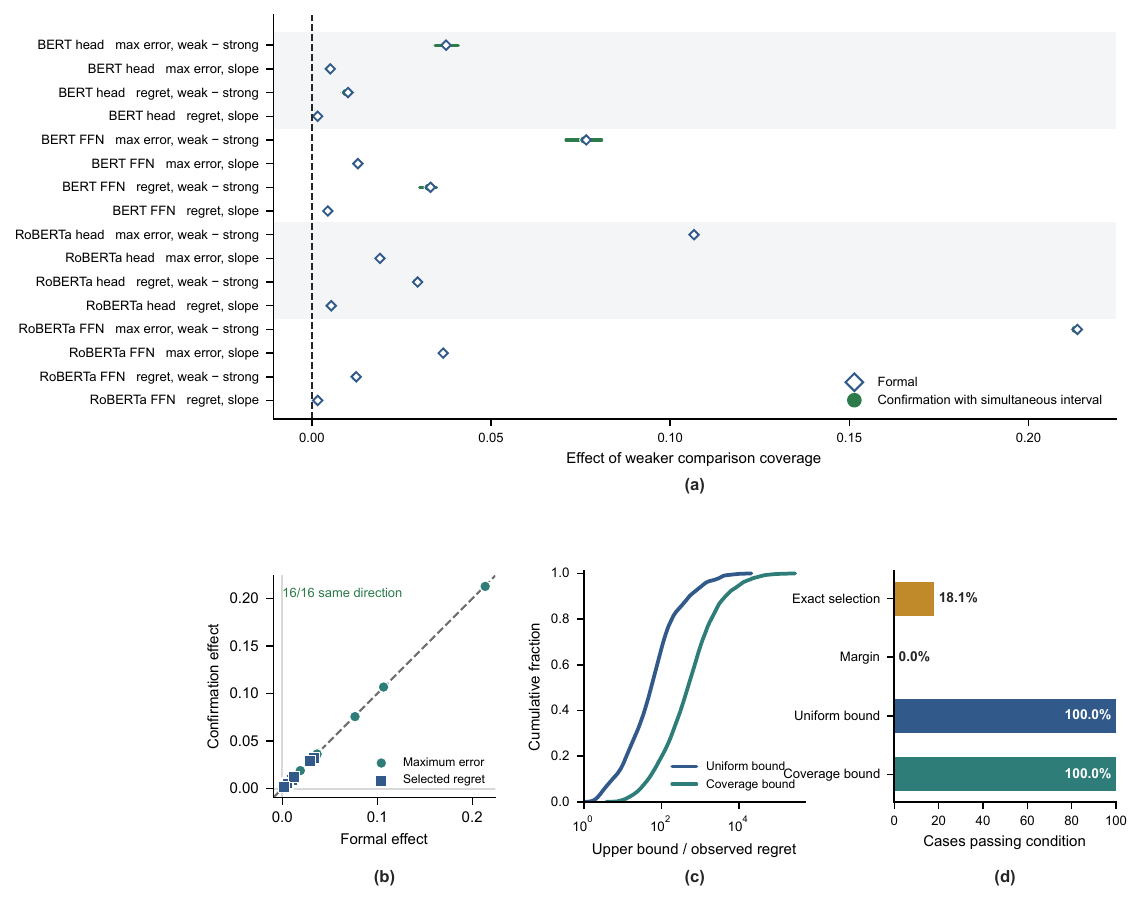}
\caption{\textbf{The controlled coverage intervention supports the proposed
mechanism, while the sufficient bounds remain conservative.}
\textbf{(a)} Primary estimates and confirmation estimates for 16 prespecified
coverage effects. Lines are simultaneous 95\% component-bootstrap intervals.
Positive values mean that weaker comparison coverage increases error or
regret.
\textbf{(b)} Primary and confirmation estimates for the same endpoints.
Circles denote maximum-error effects, squares denote selected-regret effects,
and the dashed line marks equality.
\textbf{(c)} Empirical distributions of the uniform and coverage upper bounds
divided by observed regret for the 60,409 positive-regret records. A ratio of
one would be tight.
\textbf{(d)} Exact selection occurs in 18.1\% of records. The margin condition
does not hold in this design. Both upper-bound inequalities hold in all
73,728 records.}
\label{fig:coverage}
\end{figure}

The record-level checks clarify what this positive mechanism result means.
The uniform inequality and coverage-transfer inequality both hold in
73,728 of 73,728 records. Exact surrogate selection occurs in 13,319 records,
or 18.1\%. The sufficient margin condition holds in none. The empirical bound
ratios are often much larger than one.

There is no contradiction. The inequalities are valid upper bounds. The
margin test is sufficient rather than necessary. The intervention shows that
coverage changes the observed error and regret in the predicted direction. It
also shows that the available sufficient conditions are too conservative to
explain most exact selections.

\subsection{OSSCAR improves local fidelity but does not win the fixed-mask test}

Restricted reconstruction produces a clear local fidelity pattern
(Figure~\ref{fig:osscar}). Among the 75 primary fidelity endpoints, 68
simultaneous intervals are negative, five cross zero, and two are positive.
The aggregate local-minus-broad effects are \(-0.0567\) for mean mismatch,
\(-0.0801\) for the 90th percentile, and \(-0.1302\) for the maximum. Their
simultaneous intervals are, respectively,
\([-0.0610,-0.0524]\), \([-0.0867,-0.0734]\), and
\([-0.1451,-0.1154]\).

\begin{figure}[!tbp]
\centering
\includegraphics[width=\textwidth]{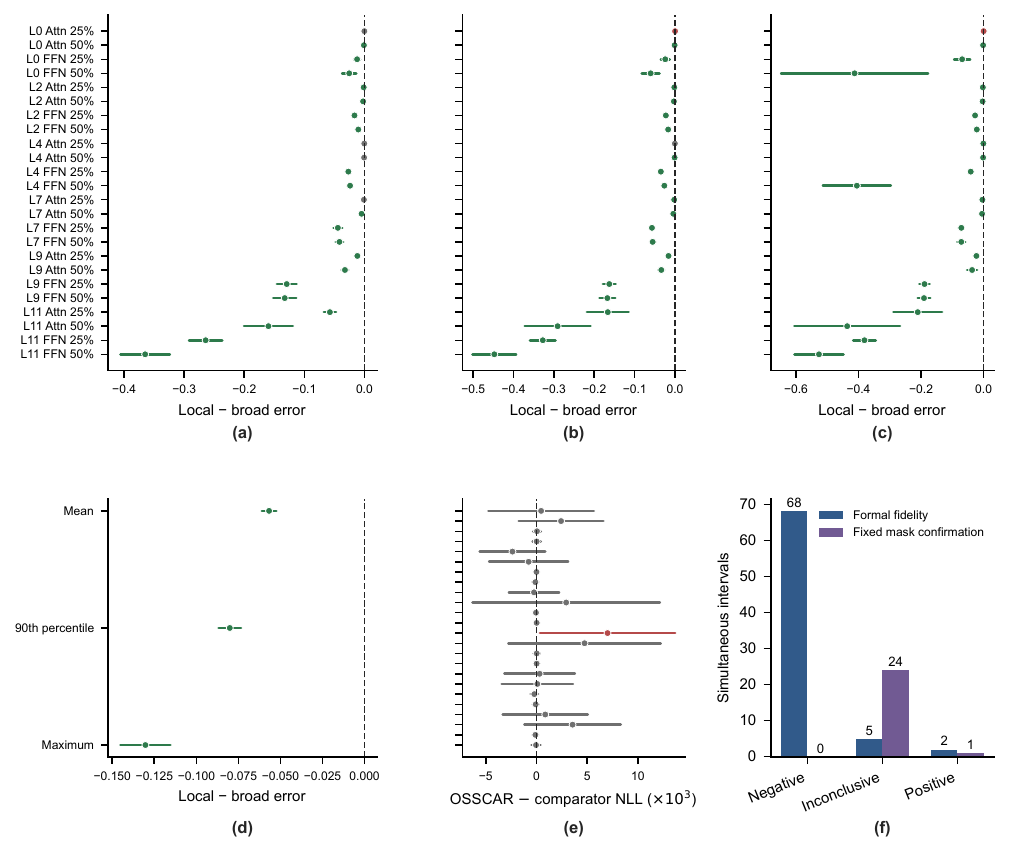}
\caption{\textbf{Restricted reconstruction improves local fidelity but
does not establish fixed-mask task superiority.}
\textbf{(a)}, \textbf{(b)}, and \textbf{(c)} Local-minus-broad reconstruction
mismatch in 24 OPT-125M cells.
Points show mean, 90th-percentile, or maximum mismatch contrasts. Lines are
simultaneous 95\% document-bootstrap intervals. Negative values indicate
better fidelity near OSSCAR.
\textbf{(d)} Aggregate primary fidelity effects.
\textbf{(e)} OSSCAR-minus-comparator NLL for 24 fixed confirmation contrasts.
Negative values favor OSSCAR.
\textbf{(f)} Direction counts for the 75 primary fidelity endpoints and 25
fixed-mask confirmation endpoints. The aggregate confirmation effect is
\(0.000774\), with interval \([-0.000300,0.001848]\).}
\label{fig:osscar}
\end{figure}
\FloatBarrier

The fixed-mask task comparison gives a different answer. Twenty-four of 25
simultaneous intervals cross zero. One cell favors the comparator. None favors
OSSCAR. The aggregate OSSCAR-minus-comparator NLL is \(0.000774\), with
interval \([-0.000300,0.001848]\).

The two findings are compatible. Local reconstruction fidelity is a property
of the proxy near the selector. Fixed-mask NLL is a task comparison between
two fixed supports. Better local fidelity does not, without an additional
link, prove that the selected support has lower task loss.

\subsection{Cross-study endpoint summary}

Table~\ref{tab:endpoint-summary} reports the directional outcome of each
prespecified endpoint family. It complements Figures~\ref{fig:external-pool}
to \ref{fig:osscar} by collecting exact counts without treating different
metrics as commensurate effect sizes.

\begin{table}[H]
\centering
\caption{Directional summary of simultaneous intervals. The prespecified
direction is lower method cost for TextbookQA and Natural Questions, a positive
weak-minus-strong coverage effect for QQP, lower local reconstruction mismatch
for OSSCAR fidelity, and lower OSSCAR NLL for fixed-mask confirmation.
``Aligned'' counts intervals in that direction, whereas ``Contains 0'' counts
intervals that include zero.}
\label{tab:endpoint-summary}
\small
\setlength{\tabcolsep}{5pt}
\begin{tabularx}{\textwidth}{@{}Yrrrrr@{}}
\toprule
Endpoint family & \(n\) & Aligned & Contains 0 & Opposite & Aligned (\%) \\
\midrule
TextbookQA external task cost
& 20 & 7 & 7 & 6 & 35.0 \\
Natural Questions fixed comparison
& 4 & 1 & 3 & 0 & 25.0 \\
QQP coverage effect
& 16 & 16 & 0 & 0 & 100.0 \\
OSSCAR local reconstruction fidelity
& 75 & 68 & 5 & 2 & 90.7 \\
OSSCAR fixed-mask task NLL
& 25 & 0 & 24 & 1 & 0.0 \\
\bottomrule
\end{tabularx}
\end{table}

\section{Discussion and Limitations}

PruneShift changes the unit of evaluation. A surrogate curve is evidence about
prediction on the masks used to draw that curve. A pruning decision is evidence
about one selected mask and its alternatives. The theory shows why the first
object does not automatically certify the second. The experiments show that
the missing transfer can succeed, fail, or remain unresolved.

Coverage is central. Broad evaluation can miss the thin region explored by a
selector. The density-ratio result makes this dependence explicit. The QQP
intervention then changes comparison coverage while holding the main
estimator and selector definitions fixed. Its result supports the mechanism,
but the loose bounds warn against treating a sufficient inequality as a sharp
performance predictor.

A finite-pool certificate offers a different route. It avoids a natural
density-ratio estimate and gives a transparent guarantee within a declared
pool. Its scope must remain visible. Natural Questions shows that a valid
within-pool guarantee can coexist with an inconclusive comparison against an
external strategy.

The OSSCAR study adds a second caution. Reconstruction is part of the pruning
intervention, so it must match the public method. Once the retained weights are
truly refitted, local fidelity improves strongly. The independent task
comparison remains unresolved. This is not a failure of the reconstruction
objective. It shows that proxy fidelity and task superiority are different
claims.

The framework is useful even when it produces a null result. TextbookQA rules
out a simple universal transport story. Natural Questions narrows the
finite-pool claim. OSSCAR separates a strong proxy result from an unsupported
task claim. These distinctions make subsequent algorithm development more
targeted. They identify whether a method needs a better estimator, better
coverage, a stronger selector, or a direct confirmation design.

Several limitations define the scope of these conclusions. Encoder evidence
uses BERT and RoBERTa on
reading comprehension and paraphrase classification. The external
TextbookQA source is an answerable, text-only subset of a multimodal dataset.
The decoder study uses one model scale and six OPT-125M layers. Larger models,
other tasks, and other structured pruning methods may produce different
transfer patterns.

The comparison domains are declared finite sets. Regret in such a set is a
lower bound on global regret when the selected mask belongs to the set, as
shown in Lemma~\ref{lem:pool-lower}. Its magnitude need not approximate the
global gap. The Natural Questions certificate is therefore a pool guarantee,
not a global pruning guarantee.

Some intervals have deliberately limited interpretations. TextbookQA
bootstrap intervals describe stability across the observed context clusters.
They are not distribution-free population intervals. The QQP checkpoints are
fixed design instances rather than a random sample from a population of
training seeds.

The theoretical guarantees are sufficient and can be conservative. The QQP
study confirms both validity and looseness. Better data-dependent bounds are a
promising extension, but they must preserve independence and account for
selector adaptation.

Finally, all experiments use logical masks. They do not measure exported model
latency, throughput, energy, recovery training, or hardware speedup. The work
evaluates pruning decisions, not deployment efficiency.

\section{Conclusion}

Structured pruning surrogates are used to make decisions. Their evaluation
should therefore reach the decision itself. PruneShift separates broad
predictive fidelity, fidelity near selector outputs, and fixed comparison
quality. It proves that aggregate rank agreement cannot certify an argmin
without additional assumptions. It then provides coverage, margin, finite
comparison, and finite-pool conditions that make the missing assumptions
explicit.

The four studies support this scoped view. External transport is
heterogeneous. A valid finite-pool certificate does not guarantee universal
improvement. Controlled coverage changes error and regret in the predicted
direction, but the bounds are loose. Restricted OSSCAR reconstruction improves local
fidelity without establishing fixed-mask task superiority. Together, these
results provide a clearer standard for evaluating the decisions made by
structured pruning surrogates.

{\fontsize{9.5}{11}\selectfont
\bibliographystyle{unsrtnat}
\bibliography{references}
}

\end{document}